\DeclareMathOperator*{\argmin}{arg\,min}
\newcommand{\bp}{\begin{proof} \small }
\newcommand{\ep}{\end{proof} \normalsize}
\newcommand{\epx}{\end{proof} \small}
\newcommand{\bpa}{\begin{proofappx} \footnotesize }
\newcommand{\epa}{\end{proofappx} \small }
\newtheorem{theorem}{Theorem}
\newtheorem{lemma}{Lemma}
\newtheorem{assumption}{Assumption}
\newtheorem*{theorem*}{Theorem}
\newtheorem*{proposition*}{Proposition}
\newtheorem*{corollary*}{Corollary}
\newtheorem*{lemma*}{Lemma}
\newtheorem*{assumption*}{Assumption}
\newtheorem*{definition*}{Definition}
\newtheorem*{claim*}{Claim}
\newcommand{\be}{\begin{equation}}
\newcommand{\ee}{\end{equation}}
\newcommand{\bs}{\begin{subequations}}
\newcommand{\es}{\end{subequations}}
\newcommand{\bq}{\begin{eqnarray}}
\newcommand{\eq}{\end{eqnarray}}
\newcommand{\bqn}{\begin{eqnarray*}}
\newcommand{\eqn}{\end{eqnarray*}}
\newcommand{\ba}{\left[ \begin{array}}
\newcommand{\ea}{\\ \end{array} \right]}
\newcommand{\ben}{\begin{enumerate}}
\newcommand{\een}{\end{enumerate}}
\def\g{{\boldsymbol{g}}}
\def\v{{\boldsymbol{v}}}
\def\w{{\boldsymbol{w}}}
\def\x{{\boldsymbol{x}}}
\def\y{{\boldsymbol{y}}}
\def\z{{\boldsymbol{z}}}
\def\real{{\mathchoice%
{\hbox{\rm\setbox1=\hbox{I}\copy1\kern-.45\wd1 R}}
{\hbox{\rm\setbox1=\hbox{I}\copy1\kern-.45\wd1 R}}
{\hbox{\scriptsize\rm\setbox1=\hbox{I}\copy1\kern-.45\wd1 R}}
{\hbox{\scriptsize\rm\setbox1=\hbox{I}\copy1\kern-.45\wd1 R}}}}
\def\Zint{{\mathchoice{\setbox1=\hbox{\sf Z}\copy1\kern-.75\wd1\box1}
{\setbox1=\hbox{\sf Z}\copy1\kern-.75\wd1\box1}
{\setbox1=\hbox{\scriptsize\sf Z}\copy1\kern-.75\wd1\box1}
{\setbox1=\hbox{\scriptsize\sf Z}\copy1\kern-.75\wd1\box1}}}
\newcommand{\complex}{ \hbox{\rm C\kern-0.45em\rule[.07em]{.02em}{.58em}%
\kern 0.43em}}
\begin{document}
	%
\title{Dynamic \!\!\! Scheduling \!\!\! for \!\!\! Over-the-Air \!\! Federated Edge Learning with Energy Constraints}

\author{ 
	Yuxuan Sun,~\IEEEmembership{Member,~IEEE,}
	Sheng Zhou,~\IEEEmembership{Member,~IEEE,} \\
	Zhisheng Niu,~\IEEEmembership{Fellow,~IEEE,} 
	Deniz G\"und\"uz,~\IEEEmembership{Senior Member,~IEEE}
	\thanks{ 
	Y. Sun, S. Zhou and Z. Niu are with the Beijing National Research Center for Information Science and Technology, Department of Electronic Engineering, Tsinghua University, Beijing 100084, China (e-mail: sunyuxuan@tsinghua.edu.cn, sheng.zhou@tsinghua.edu.cn, niuzhs@tsinghua.edu.cn).}  
	\thanks{D. G\"und\"uz is with the Department of Electrical and Electronic Engineering, Imperial College London, London SW7 2BT, UK (e-mail:  d.gunduz@imperial.ac.uk).}
	\thanks{Part of this work has been presented in IEEE ICC 2020 \cite{Sun2020ICC}. }
}
	
\maketitle

\begin{abstract}
	Machine learning and wireless communication technologies are jointly facilitating an intelligent edge, where federated edge learning (FEEL) is a promising training framework. As wireless devices involved in FEEL are resource limited in terms of communication bandwidth, computing power and battery capacity, it is important to carefully schedule them to optimize the training performance. In this work, we consider an over-the-air FEEL system with analog gradient aggregation, and propose an energy-aware dynamic device scheduling algorithm to optimize the training performance under energy constraints of devices, where both communication energy for gradient aggregation and computation energy for local training are included. 
	The consideration of computation energy makes dynamic scheduling challenging, as devices are scheduled before local training, but the communication energy for over-the-air aggregation depends on the $l_2$-norm of local gradient, which is known after local training. We thus incorporate estimation methods into scheduling to predict the gradient norm. Taking the estimation error into account, we characterize the performance gap between the proposed algorithm and its offline counterpart.
	Experimental results show that, under a highly unbalanced local data distribution, the proposed algorithm can increase the accuracy by $4.9\%$ on CIFAR-10 dataset compared with the myopic benchmark, while satisfying the energy constraints.
\end{abstract}

\begin{IEEEkeywords}
	Federated edge learning, over-the-air computation, energy constraints, dynamic scheduling, Lyapunov optimization.
\end{IEEEkeywords}
	
%
\IEEEpeerreviewmaketitle

\section{Introduction}
Many emerging applications at the wireless edge, such as autonomous driving, virtual reality and Internet of things (IoT), are powered by modern machine learning (ML) techniques. Data-driven approaches also penetrate into the wireless network itself for channel estimation, encoding and decoding, resource allocation, etc. \cite{MLintheair,Jiang2017ML}. 
The complex ML models for these applications need to be trained over massive datasets, while data samples are usually generated by edge devices. Traditional centralized training methods can hardly be competent, as collecting data at one location would create network congestion, lead to extremely high transmission cost and may cause privacy concerns. 
On the other hand, computing capabilities of base stations (BSs) and edge devices, such as mobile phones, smart vehicles and IoT sensors, are becoming increasingly powerful, enabling intensive computations at the edge \cite{Park2019wireless}.
In this context, federated learning (FL) is considered as a promising training framework that can exploit distributed data and computational resources with limited communication and privacy leakage \cite{googleai, flatscale, McMahan2017commun}. In FL, multiple devices train a shared model collaboratively with local data, and a central \emph{parameter server (PS)} coordinates training and aggregates global model periodically.

The limited communication resource and non-independent and identically distributed (i.i.d.) data, i.e., the distribution of local data at one device is not identical with that of other devices or the global data, are the two major challenges in FL \cite{li2019federated, zhao2018federated}. 
Current methods to improve the communication efficiency of FL mainly include model compression \cite{QSGD, FedPAQ, gradientspars, du2020high}, device scheduling \cite{yang2019scheduling, Amiri2021convergence}, and enabling multiple local iterations \cite{FedPAQ, wang2019adaptive}. Under non-i.i.d. data, the training performance can be improved by sharing global i.i.d. data with devices \cite{zhao2018federated} or the PS \cite{Yoshida2020HybridFL}, introducing data redundancy \cite{Sun2020ICC}, or scheduling devices based on their importance \cite{Amiri2021convergence}.

In a wireless network, FL can be carried out among wireless edge devices coordinated by a BS, called federated edge learning (FEEL). In FEEL, participating devices are often resource limited in terms of wireless bandwidth, computing capability and battery capacity. A key challenge is to design device scheduling and resource allocation algorithms that optimize the training performance under device energy constraints and training delay budget.
Considering the communication energy constraints, an energy-efficient bandwidth allocation policy is proposed to maximize the fraction of scheduled devices in \cite{zeng2019energy}, while an online algorithm is designed to maximize the sum utility of scheduling in \cite{Jie2020client}.
Due to the timeliness requirements of FEEL tasks at the wireless edge \cite{Sun2020edge}, training delay is also becoming a key performance metric.
The total communication delay for training is minimized by joint device selection and wireless resource allocation in \cite{Chen2020Convergence}, while the total training delay taking into account both local computations and model transmission is minimized in \cite{Shi2021Joint} by balancing the trade-off between the average delay per round and the total number of rounds required for convergence. Communication delay is combined with the importance of each update for probabilistic scheduling in \cite{Ren2020scheduling}.
A hierarchical FEEL framework is proposed in \cite{abad2019hierar}, where the end-to-end training delay is minimized by the joint optimization of update interval and model compression.
The trade-off between total energy consumption for communication and computation and the training delay is further considered in \cite{tran2019federated, yang2019energy, Mo2020energy}, yielding a joint design of local computation speed and wireless resource allocation.

The literature above mainly focuses on the implementation of FEEL via digital wireless communications.
However, the unique communication requirement of FEEL, i.e., the PS only needs the \emph{average} of local model updates rather than each individual vector, makes the separate design of learning and communication protocol highly suboptimal \cite{Gunduz2020communicate}. A new solution called \emph{over-the-air computation} is facilitated to further improve the communication efficiency \cite{Zhu2020toward, zhu2019broadband, mma2019federated, mma2020machine, Zhu2020tuning}, which is achieved by synchronizing the devices to transmit their local gradients or models in an analog fashion, and exploiting the superposition property of a wireless multiple access channel (MAC) to do the summation over-the-air. 
Power limits of devices can highly degrade the training performance, which yields the design of power allocation schemes over noisy channels \cite{mma2020machine}, fading channels \cite{mma2019federated} and broadband fading channels \cite{zhu2019broadband}. Power control algorithms that take into account the importance of updates \cite{Guo2021analog}, uplink and downlink noise \cite{Wei2021federated,mma2020convergence}, gradient statistics \cite{Zhang2020gradient} and non-i.i.d. data \cite{Sery2020het} are further proposed. While over-the-air FEEL requires accurate channel state information (CSI), it is shown in \cite{Amiri2020blind} that multiple antennas can help to relax the CSI requirement.
The impact of imperfect CSI or synchronization across devices is considered in \cite{Shao2021misaligned}, and a digital realization of over-the-air FEEL is further proposed in \cite{Zhu2020one}, based on one-bit gradient quantization and majority voting.

Existing papers on over-the-air FEEL mainly consider average power constraints for communication, but have not considered the computation energy for local model training, which is in fact non-negligible for edge devices.
In this work, we aim to optimize the training performance under total energy constraints of devices by designing an energy-aware dynamic device scheduling algorithm, where energy is consumed for both communication and computation.
The introduction of computation energy makes the scheduling decisions challenging due to the \emph{causality of decision making and energy consumption}. This is because, in over-the-air FEEL, the communication energy of each device for gradient aggregation depends on the $l_2$-norm of its local gradient estimate, which can only be obtained \emph{after computation}. However, online scheduling decision should be made at the start of each training round \emph{before computation}. Note that this issue does not arise in the case of digital communication, as the transmission power can be chosen independently of the local update.

The main contributions of this work are summarized as follows:

1) We characterize the convergence bound of the considered over-the-air FEEL system, based on which we formulate a device scheduling problem to optimize the training performance under the total energy budget of each device. Both the communication energy for gradient aggregation and the computation energy for local gradient calculation are included.

2) Due to the unavailability of future system states, we design an energy-aware dynamic device scheduling algorithm based on Lyapunov optimization, where a virtual queue is constructed to indicate the up-to-date energy deficit and enable online decision making.

3) To further address the challenge that communication energy is unknown at the device scheduling point, we propose estimation methods to predict the $l_2$-norm of the local gradients upon scheduling, and characterize the theoretical performance guarantee of the proposed scheduling algorithm by taking the error of energy estimation into consideration.

4) Experiments on MNIST and CIFAR-10 datasets validate that the proposed dynamic device scheduling algorithm can achieve higher model accuracies compared with the myopic benchmark, while satisfying the energy limits. Under a highly-non-i.i.d. scenario, the accuracy can be increased by $4.9\%$.
The impact of design parameters on the training performance and energy consumption are also evaluated to provide guidelines for practical implementations. 

The rest of this paper is organized as follows.
In Section \ref{sys}, we introduce the system model and problem formulation.
In Section \ref{conv}, we carry out convergence analysis.
The energy-aware dynamic device scheduling algorithm is developed in Section \ref{algo} with its performance guarantee.
Experimental results are shown in Section \ref{sim}, and conclusions are given in Section \ref{con}.

\begin{figure*}[!t]
	\centering
	\includegraphics[width=0.55\linewidth]{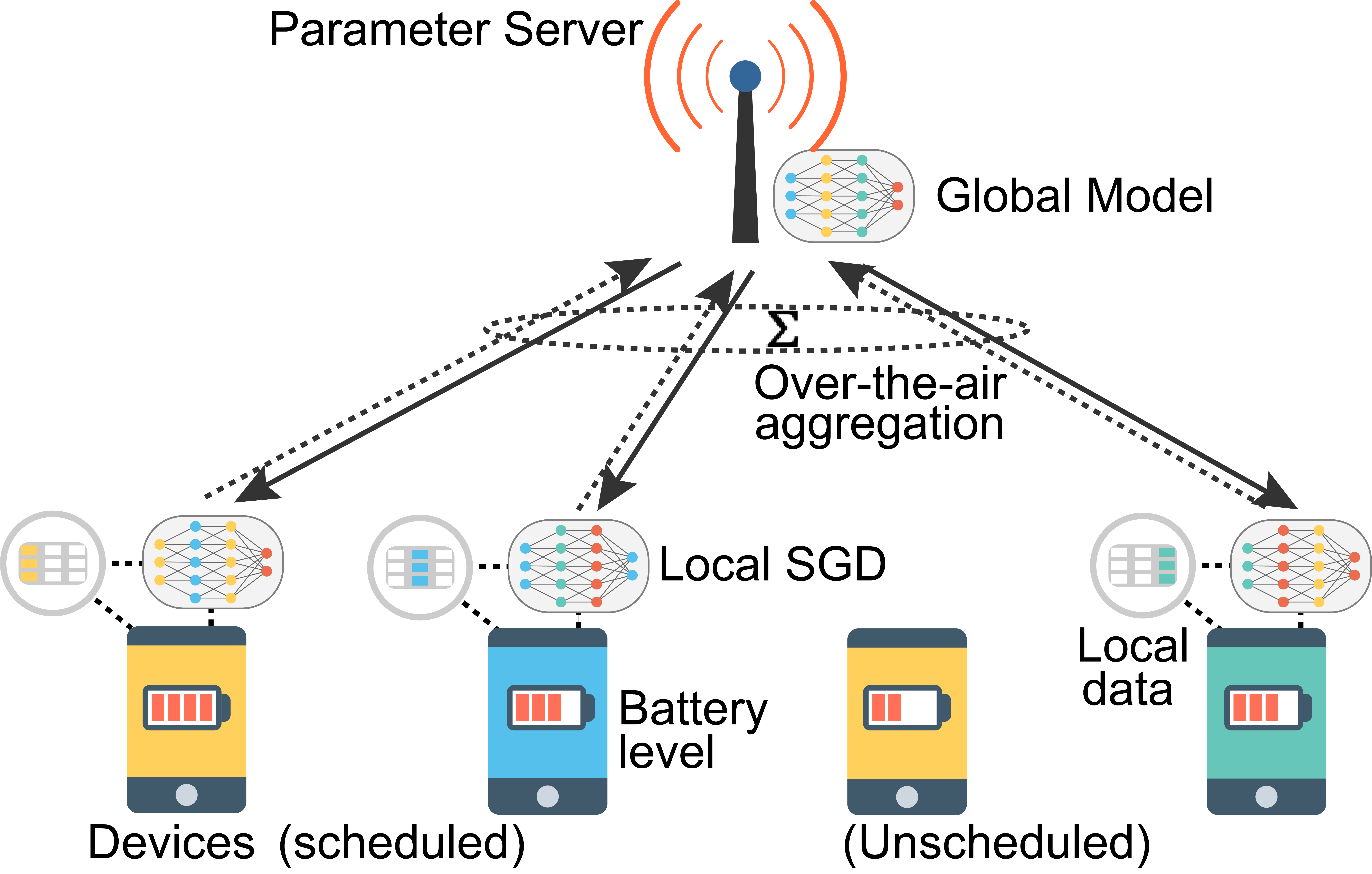}
	\vspace{-6mm}
	\caption{Illustration of the considered over-the-air FEEL system. }
	\label{system}
	\vspace{-6mm}
\end{figure*}

\section{System Model and Problem Formulation} \label{sys}

\subsection{System Overview}
As shown in Fig. \ref{system}, we consider a FEEL system with one PS and $N$ devices, denoted by $\mathcal{N}=\{1,2,\ldots,N\}$. Each device $n\in\mathcal{N}$ has a local dataset $\mathcal{D}_n$ with $D$ data samples, and the global dataset is denoted by $\mathcal{D}=\bigcup_{n=1,\ldots, N}\mathcal{D}_n$ with $ND$ data samples. 

Given a single data sample $\x\in\mathcal{D}$, a loss function $f(\w,\x)$ is used to measure the fitting performance of an $s$-dimensional model vector $\w\in \mathbb{R}^{s}$.
At device $n\in\mathcal{N}$, the local loss function $F_n(\w)$ is defined as the the average loss over local data samples, i.e., 
\begin{align}
F_n(\w)\triangleq \frac{1}{D}\sum_{\x \in\mathcal{D}_n} f(\w,\x).
\end{align}

The goal of the FEEL task is to train a shared global model $\w$ that minimizes the global loss function $F(\w)$, which is defined as
\begin{align}
F(\w)\triangleq\frac{1}{N}\sum_{n=1}^{N}F_n(\w)=\frac{1}{ND}\sum_{n=1}^{N}\sum_{\x \in\mathcal{D}_n} f(\w,\x).
\end{align}

Under the coordination of the PS, the FEEL system iterates the following three steps until the termination condition is satisfied: 1) the PS broadcasts the up-to-date global model to a subset of devices, which are scheduled to participate in the current training process; 2) the scheduled devices compute their local gradients with local datasets; and 3) the PS aggregates the local gradients over a wireless MAC and updates the global model. Each iteration consisting of these three steps is called a training round, which is indexed by $t$ in the following. The termination conditions that are commonly used for FEEL include the convergence of the global model, or reaching a preset maximum number of training rounds. Since we consider an energy-limited wireless scenario, we set the total number of training rounds to $T$.

\subsection{Local Gradient Computation}

At the start of the $t$-th training round, the PS schedules a subset of devices $\mathcal{B}_t\subseteq \mathcal{N}$, and broadcasts the global model vector $\w_{t-1}$ obtained in the last round to these scheduled devices. Let $\beta_{n,t}\in\{0,1\}$ be an indicator, with $\beta_{n,t}=1$ if device $n$ is scheduled to participate in the $t$-th training round, and $\beta_{n,t}=0$ otherwise. Thus $\mathcal{B}_t=\{n|\beta_{n,t}=1, n\in\mathcal{N}\}$. We also assume that the broadcast of $\w_{t-1}$ is error-free since the PS is a more capable node with sufficient power. 

Each scheduled device $n\in\mathcal{B}_t$ computes the local gradient estimate $\tilde{\g}_{n,t}$ by running the stochastic gradient descent (SGD) algorithm on a local mini-batch $\mathcal{L}_{n,t}\subseteq \mathcal{D}_n$, according to
\begin{align}\label{sgd_local}
\tilde{\g}_{n,t}=\frac{1}{L_b}\sum_{\x \in\mathcal{L}_{n,t}} \nabla f\left(\w_{t-1},\x\right),
\end{align}
where $L_b=|\mathcal{L}_{n,t}|\leq D$ is the batch size, and $\mathcal{L}_{n,t}$ is uniformly selected at random from the local dataset $\mathcal{D}_n$.
We remark here that, a single-iteration gradient update is considered in this work, but the proposed algorithm can be extended to a more general case where multiple local iterations are carried out in each training round.
Also note that, the batch size is considered as a hyper-parameter rather than an optimization variable, and set to an identical value across devices in this work, since local data might be non-i.i.d. across devices and local training should guarantee the fairness by exploiting the same amount of data.

We assume that for device $n$, the computation energy for calculating the local gradient on a single data sample is $e_n$, which can be estimated according to the number of floating point operations (FLOPs) of the ML model and the computation frequency of the device \cite{Mo2020energy}. Therefore, the computation energy consumption $E^{\text{[cp]}}_{n,t}$ at device $n$ in round $t$ is given by
\begin{align} \label{E_cp}
E^{\text{[cp]}}_{n,t}=e_nL_b .
\end{align}

\subsection{Gradient Aggregation Over-the-Air}
We assume that the devices transmit their local gradients over a noisy wireless MAC in an analog fashion for global gradient aggregation. To enable the summation of local gradients over-the-air, transmissions are synchronized across all the scheduled devices, and the transmit power of each device is aligned with the others. To be specific, let $h_{n,t}$ be the wireless channel gain between device $n$ and the PS, which is assumed to remain constant during one transmission period. Note that, the device scheduling policy designed in this work is applicable to arbitrary channel models. Moreover, as local gradient computation takes time and the wireless channel is time variant, the channel gain observed at the start of each training round may not be precise. The observation error, i.e., the difference between the observed channel gain that determines the device scheduling and its true value during transmission, will also be considered in the following. Let $\sigma_t$ be the power scalar that determines the received SNR at the PS. Then the transmit power $p_{n,t}$ of each scheduled device $n\in\mathcal{B}_t$ is set to 
\begin{align}
p_{n,t}=\frac{\sigma_t}{h_{n,t}},
\end{align} 
and $p_{n,t}\tilde{\g}_{n,t}$ is transmitted from device $n$ to the PS. The communication energy consumption $E^{\text{[tr]}}_{n,t}$ at device $n$ in round $t$ is then given by
\begin{align} \label{E_tr}
E^{\text{[tr]}}_{n,t}=\left\lVert  p_{n,t} \tilde{\g}_{n,t}\right\rVert_2^2
=\frac{\sigma_t^2 }{h_{n,t}^2}\left\lVert    \tilde{\g}_{n,t}\right\rVert_2^2,
\end{align}
where $\left\lVert \x\right\rVert_2$ represents the $l_2$-norm of vector $\x$.
Therefore, if device $n$ is scheduled in the $t$-th round, the total energy consumption $E_{n,t}$ for computation and communication is
\begin{align}
	E_{n,t}=E^{\text{[tr]}}_{n,t}+E^{\text{[cp]}}_{n,t}=\frac{\sigma_t^2 }{h_{n,t}^2}\left\lVert \tilde{\g}_{n,t}\right\rVert_2^2 +e_nL_b.
\end{align}

At the PS, the received signal $\y_t$ is given by
\begin{align} 
\y_t=\sum_{n \in\mathcal{B}_t} h_{n,t} p_{n,t} \tilde{\g}_{n,t}+\z_t 
=\sigma_t\sum_{n\in\mathcal{B}_t}\tilde{\g}_{n,t}+\z_t,
\end{align}
where $\z_t\in\mathbb{R}^s$ is an additive white Gaussian noise vector, in which each entry is i.i.d. and follows Gaussian distribution with zero mean and variance $\sigma_0^2$.

In FEEL, we aim to update the global model vector $\w_t$ according to 
\begin{align}
\w_t=\w_{t-1}-\eta_t \frac{\sum_{n\in\mathcal{B}_t }\tilde{\g}_{n,t}}{|\mathcal{B}_t|},
\end{align}
where $\eta_t$ is the learning rate in the $t$-th training round, and $|\cdot|$ denotes the cardinality of a set. Due to channel noise, the actual global model is updated according to
\begin{align} \label{global_model}
\w_{t}=\w_{t-1}-\eta_t\frac{\y_t}{\sigma_t |\mathcal{B}_t|}
=\w_{t-1}-\eta_t\left(\frac{\sum_{n\in\mathcal{B}_t}\tilde{\g}_{n,t}}{|\mathcal{B}_t|} +\frac{\z_t}{\sigma_t |\mathcal{B}_t|}\right).
\end{align}

\subsection{Problem Formulation} \label{formulation}
Given the total number of training rounds $T$ and the initial global model vector $\w_0$, we aim to minimize the expected global loss $\mathbb{E}[F(\w_T)]$ under the energy constraints of devices, by optimizing the device scheduling $\{\beta_{n,t}\}$ and power scalar $\{\sigma_t\}$. The expectation $\mathbb{E}[F(\w_T)]$ is taken over the randomness of channel noise and data sampling for local SGD. The problem is formulated as
\begin{subequations}
	\begin{align}
	\mathcal{P}1: ~\min_{\left\{\sigma_t, ~\beta_{n,t}\right\}_{t=1}^T} &~~\mathbb{E}[F(\w_T)] \\
	~\text{s.t.} ~~~~&~\sum_{t=1}^{T}\beta_{n,t}E_{n,t} \leq \bar{E}_n,  ~ \forall n,   \label{P1_energy}\\
	&~\sigma_t>0, ~\beta_{n,t}\in\{0,1\},  ~\forall n, t.  \label{P1_cons2}
	\end{align}
\end{subequations}
In the first constraint \eqref{P1_energy}, $\bar{E}_n$ represents the total energy budget of device $n$, and the inequality indicates that for each device, the total energy consumption for both local gradient computation and wireless communication over $T$ training rounds cannot exceed its given budget. The second constraint limits the ranges of optimization variables. 

Based on the law of telescoping sums, problem $\mathcal{P}1$ can be re-written as
	\begin{align}
	\mathcal{P}2: \min_{\left\{\sigma_t, ~\beta_{n,t}\right\}_{t=1}^T} ~~&\sum_{t=1}^{T} ~\mathbb{E}[F(\w_t)]- \mathbb{E}[F(\w_{t-1})]\\
	~\text{s.t.} ~~~~~~~ &\text{constraints}~ \eqref{P1_energy}, ~\eqref{P1_cons2}. \nonumber
	\end{align}

There are three major challenges to solve problem $\mathcal{P}2$:

 \emph{1) The inexplicit form of the objective function:}
	Since the neural network architectures for ML might be deep and diverse, and the evolution of the model vector is very complex during the training process, it is hard to express $\mathbb{E}[F(\w_T)]$  or $\mathbb{E}[F(\w_t)]- \mathbb{E}[F(\w_{t-1})]$ in closed form.
		
\emph{2) The unavailability of future information:}
	Optimally solving problem $\mathcal{P}2$ requires the system states of future training rounds due to the existence of total energy constraints, which are not available in practice. Therefore, we aim to design an online scheduling algorithm in this work, which only relies on the system states in the current training round.
	
 \emph{3) The causality of decision making and energy consumption:}
	A unique characteristic of over-the-air FEEL is that the communication energy \eqref{E_tr} depends on the $l_2$-norm of local gradient through $\left\lVert    \tilde{\g}_{n,t}\right\rVert_2^2$, which can only be acquired after computing the gradient in each round. 
	However, online device scheduling decision should be made before gradient computation, in order not to consume computation energy at unscheduled devices, or even not to transmit global updates to these devices. This means that the exact energy consumption in the current training round is unknown upon decision making.
	Moreover, the channel gain $h_{n,t}$ observed at the start of each round may not be precise, making the estimation of communication energy more inaccurate.

To address these challenges, we first substitute the objective function with its upper bound based on the convergence analysis in Section \ref{conv}. Then in Section \ref{algo}, we design an online device scheduling algorithm based on Lyapunov optimization, where the unknown instantaneous states for decision making, including the $l_2$-norm of local gradients and the wireless channel gains, are substituted with their estimates, and in particular, the impact of the estimation error on the performance of the proposed algorithm is analyzed.

\section{Convergence Analysis and Problem Transformation} \label{conv}
In this section, we provide an upper bound for the objective function in problem $\mathcal{P}2$ based on the convergence analysis, and transform the original optimization problem to an alternative form with explicit expressions.

For the simplicity of notation, we define the local full gradient on device $n$ in the $t$-th round as $\g_{n,t}\triangleq\nabla F_n(\w_{t-1})= \frac{1}{D}\sum_{\x\in\mathcal{D}_n} \nabla f(\w_{t-1},\x) $, the global full gradient in round $t$ as $\g_t\triangleq\nabla F(\w_{t-1}) =   \frac{1}{N}\sum_{n=1}^{N} \g_{n,t}$, and the optimum loss as $F^*\triangleq\min_{\w\in\mathbb{R}^s} F(\w)$.

To facilitate the convergence analysis, we make the following assumptions according to the state-of-the-art literature, including \cite{QSGD,  FedPAQ, gradientspars, du2020high, mma2019federated, mma2020machine,zhu2019broadband, Zhu2020one}, etc.
\begin{assumption} \label{ass_unbias}
	Stochastic gradient is unbiased and variance-bounded, i.e., for any device $n$ and training round $t$, taking the expectation over stochastic data sampling, we have 
	\begin{align}
	&\mathbb{E}_{\x_n}\left[\nabla f\left(\w_{t-1},\x_n\right)\right]=\mathbb{E}_{\mathcal{L}_{n,t}}\left[\tilde{\g}_{n,t}\right ]=\g_t, 
	~~\mathbb{E}_{\x_n}\left[\left\lVert \nabla f\left(\w_{t-1},\x_n\right)-\g_{t}\right\rVert_2^2\right]\leq G^2, \forall n,t,
	\end{align}
	where $\x_n\in\mathcal{D}_n$ is a data sample, $\mathcal{L}_{n,t}\subseteq \mathcal{D}_n$ is a stochastic mini-batch, and $G$ is a constant.
\end{assumption}
\begin{assumption} \label{ass_smooth}
	Loss functions $F_1(\w),\ldots, F_N(\w)$ are $l$-smooth, i.e., for $\forall \v,\w \in \mathbb{R}^s$ and $n\in\mathcal{N}$,
	\begin{align}
	&F_n(\v)-F_n(\w)\leq \nabla F_n^{\text{T}}(\w)(\v-\w)+\frac{l}{2}\left\lVert \v-\w\right\rVert_2^2.
	\end{align}
\end{assumption}
\begin{assumption} \label{ass_convex}
	Loss functions $F_1(\w),\ldots, F_N(\w)$  are $\mu$-strongly convex, i.e., for $\forall \v,\w \in \mathbb{R}^s$ and $n\in\mathcal{N}$,
	\begin{align}
	F_n(\v)-F_n(\w)\geq \nabla F_n^{\text{T}}(\w)(\v-\w)+\frac{\mu}{2}\left\lVert \v-\w\right\rVert_2^2.
	\end{align}
\end{assumption}

\subsection{Convergence Analysis} \label{subsec_Tround_conv}
Based on the assumptions above, we provide a single-round convergence guarantee in the following lemma by characterizing the upper bound of $\mathbb{E}[F(\w_{t})]-\mathbb{E}[F(\w_{t-1})]$.

\begin{lemma} \label{single_round_fix}
	Given the global model vector $\w_{t-1}$ and the set of scheduled devices $\mathcal{B}_t$ at the beginning of round $t$, the single-round convergence is upper-bounded by
	\begin{align}
	\mathbb{E}[F(\w_{t})]-\mathbb{E}[F(\w_{t-1})]
	\leq  -\eta_t \left(1-\frac{l\eta_t}{2}\right) \lVert \g_t\rVert_2^2 +
	\frac{l\eta_t^2}{2}\left(\frac{G^2}{L_b |\mathcal{B}_t|}+\frac{\sigma_0^2 s}{\sigma_t^2|\mathcal{B}_t|^2}\right),
	\end{align}
	where the expectation is taken over the randomness of channel noise and SGD.
\end{lemma}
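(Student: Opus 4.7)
The plan is to follow the standard smoothness-based descent analysis, taking expectations carefully over both the SGD mini-batch sampling and the channel noise. First I would invoke Assumption~\ref{ass_smooth} with $\v = \w_t$ and $\w = \w_{t-1}$, which gives
\begin{align}
F(\w_t) - F(\w_{t-1}) \leq \g_t\tran (\w_t - \w_{t-1}) + \frac{l}{2}\lVert \w_t - \w_{t-1}\rVert_2^2.
\end{align}
Substituting the global update rule \eqref{global_model}, the increment $\w_t - \w_{t-1}$ equals $-\eta_t\bigl(\hat{\g}_t + \z_t/(\sigma_t|\mathcal{B}_t|)\bigr)$, where I set $\hat{\g}_t \triangleq |\mathcal{B}_t|^{-1}\sum_{n\in\mathcal{B}_t}\tilde{\g}_{n,t}$ for brevity.

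Next I would take expectation over the channel noise $\z_t$ and the stochastic mini-batches $\{\mathcal{L}_{n,t}\}$, conditional on $\w_{t-1}$ and $\mathcal{B}_t$. For the linear term, Assumption~\ref{ass_unbias} gives $\mathbb{E}[\tilde{\g}_{n,t}] = \g_t$ for every scheduled $n$, so $\mathbb{E}[\hat{\g}_t] = \g_t$; combined with $\mathbb{E}[\z_t] = \mathbf{0}$, this yields $\mathbb{E}[\g_t\tran(\w_t-\w_{t-1})] = -\eta_t \lVert \g_t\rVert_2^2$. For the quadratic term, I would decompose via the bias-variance identity
\begin{align}
\mathbb{E}\Bigl\lVert \hat{\g}_t + \tfrac{\z_t}{\sigma_t|\mathcal{B}_t|}\Bigr\rVert_2^2 = \lVert \g_t\rVert_2^2 + \mathbb{E}\lVert \hat{\g}_t - \g_t\rVert_2^2 + \frac{\mathbb{E}\lVert \z_t\rVert_2^2}{\sigma_t^2|\mathcal{B}_t|^2},
\end{align}
using independence of the SGD noise from the channel noise to kill the cross term, and $\mathbb{E}\lVert\z_t\rVert_2^2 = s\sigma_0^2$.

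The main work is the bound on $\mathbb{E}\lVert \hat{\g}_t - \g_t\rVert_2^2$. I would use the conditional independence of the mini-batches across devices (the devices sample locally and independently) to write
\begin{align}
\mathbb{E}\lVert \hat{\g}_t - \g_t\rVert_2^2 = \frac{1}{|\mathcal{B}_t|^2}\sum_{n\in\mathcal{B}_t}\mathbb{E}\lVert \tilde{\g}_{n,t} - \g_t\rVert_2^2,
\end{align}
and then, inside each summand, expand $\tilde{\g}_{n,t}$ as the average of $L_b$ i.i.d. per-sample gradients $\nabla f(\w_{t-1},\x)$, each with mean $\g_t$ and variance bounded by $G^2$ by Assumption~\ref{ass_unbias}. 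The i.i.d.\ averaging gives $\mathbb{E}\lVert \tilde{\g}_{n,t} - \g_t\rVert_2^2 \leq G^2/L_b$, hence $\mathbb{E}\lVert \hat{\g}_t - \g_t\rVert_2^2 \leq G^2/(L_b|\mathcal{B}_t|)$.

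Plugging the three pieces back into the smoothness inequality and grouping the coefficients of $\lVert \g_t\rVert_2^2$ yields exactly the claimed bound. The main obstacle, such as it is, is being explicit about the independence structure: I need the mini-batches $\mathcal{L}_{n,t}$ to be independent across devices $n$ (and within each device across samples) so that the cross terms in the two expansions vanish, and I need the channel noise $\z_t$ to be independent of the mini-batches so that the final cross term in the quadratic expansion is zero. Assumption~\ref{ass_convex} is not needed for this lemma, which is a pure descent bound; it will enter in the subsequent convergence rate analysis.
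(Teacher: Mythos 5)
Your proposal is correct and follows essentially the same route as the paper's proof: apply $l$-smoothness to the update rule, take expectations over channel noise and mini-batch sampling separately using independence, and bound the second moment of the aggregated stochastic gradient by $\lVert \g_t\rVert_2^2 + G^2/(L_b|\mathcal{B}_t|)$. The only difference is presentational --- you spell out the bias--variance decomposition and the independence structure explicitly, whereas the paper states the same bound directly as a single inequality --- and your observation that Assumption~\ref{ass_convex} is not needed here matches the paper, which only invokes strong convexity in the proof of Theorem~\ref{T_rounds}.
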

\begin{proof}
	See Appendix \ref{a1}.
\end{proof}

Based on Lemma \ref{single_round_fix}, the convergence performance of over-the-air FEEL after $T$ training rounds is given in the following theorem.
\begin{theorem} \label{T_rounds}
	Given the global model vector $\w_{0}$ and any device scheduling sequence $\{\mathcal{B}_t, t=1,\ldots, T\}$, after $T$ rounds of training,
	\begin{align}
	\mathbb{E}[F(\w_{T})]-F^* \leq (\mathbb{E}[F(\w_{0})]-F^*)\prod_{i=1}^{T} (1-\mu \eta_i )  +\sum_{i=1}^{T-1} A_i  \prod_{j=i+1}^{T}(1-\mu \eta_i ) +A_T,
	\end{align}
	where $A_t\triangleq\frac{\eta_t}{2}\left(\frac{G^2}{L_b |\mathcal{B}_t|}+\frac{\sigma_0^2 s}{\sigma_t^2|\mathcal{B}_t|^2}\right)$ and the learning rate satisfies $\eta_t\leq \min\{\frac{1}{l},1\}, ~\forall t$.
\end{theorem}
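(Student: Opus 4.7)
The plan is to reduce Theorem \ref{T_rounds} to a linear recursion in the optimality gap $X_t \triangleq \mathbb{E}[F(\w_t)] - F^*$ and then unroll it over $T$ rounds by induction. The starting point is Lemma \ref{single_round_fix}, which I would rewrite as
\begin{align*}
\mathbb{E}[F(\w_t)] - \mathbb{E}[F(\w_{t-1})] \leq -\eta_t\left(1-\tfrac{l\eta_t}{2}\right)\lVert\g_t\rVert_2^2 + \tfrac{l\eta_t^2}{2}\left(\tfrac{G^2}{L_b|\mathcal{B}_t|} + \tfrac{\sigma_0^2 s}{\sigma_t^2|\mathcal{B}_t|^2}\right).
\end{align*}

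First I would subtract $F^*$ from both sides and invoke $\mu$-strong convexity (Assumption \ref{ass_convex}) to obtain the Polyak–\L{}ojasiewicz-type inequality $\lVert \g_t\rVert_2^2 \geq 2\mu(\mathbb{E}[F(\w_{t-1})] - F^*)$; this converts the negative quadratic-gradient term into a contraction on $X_{t-1}$. Next I would use the hypothesis $\eta_t \leq 1/l$ twice: (i) to ensure that $1 - l\eta_t/2 \geq 1/2 > 0$, so the PL lower bound can be applied without a sign flip, and in fact $2\eta_t(1-l\eta_t/2) \geq \eta_t$, which yields the contraction factor $(1-\mu\eta_t)$ rather than the tighter-looking but awkward $(1-2\mu\eta_t(1-l\eta_t/2))$; (ii) to absorb $l$ from the noise term, namely $\tfrac{l\eta_t^2}{2} \leq \tfrac{\eta_t}{2}$, which lets me upper-bound the additive term by $A_t$ exactly as defined in the theorem. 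Combining these steps gives the one-step recursion
\begin{align*}
X_t \leq (1-\mu\eta_t)\,X_{t-1} + A_t.
\end{align*}

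The remaining work is a standard induction on $t$. Unrolling once yields $X_t \leq (1-\mu\eta_t)(1-\mu\eta_{t-1})X_{t-2} + (1-\mu\eta_t)A_{t-1} + A_t$, and by induction
\begin{align*}
X_T \leq X_0 \prod_{i=1}^{T}(1-\mu\eta_i) + \sum_{i=1}^{T-1} A_i \prod_{j=i+1}^{T}(1-\mu\eta_j) + A_T,
\end{align*}
which is exactly the claimed bound. The inductive step is routine: assume the formula holds at $T-1$, multiply through by $(1-\mu\eta_T)$, add $A_T$, and re-index the product.

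The only genuinely delicate point is the calibration of constants in the first step: one must verify that both $1-l\eta_t/2 \geq 1/2$ and $l\eta_t \leq 1$ hold simultaneously so that the contraction factor collapses to the clean $(1-\mu\eta_t)$ and the noise coefficient collapses to $\eta_t/2$. Both follow immediately from $\eta_t \leq \min\{1/l, 1\}$, but one must also note that $\mu \leq l$ (a standard consequence of Assumptions \ref{ass_smooth} and \ref{ass_convex}) to guarantee $1-\mu\eta_t \in [0,1)$ so that the telescoping product is itself a contraction. The unrolling is otherwise mechanical and needs no additional assumption on the learning-rate schedule beyond the per-round bound already stated.
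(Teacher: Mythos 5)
Your proposal is correct and follows essentially the same route as the paper's proof: apply the Polyak--\L{}ojasiewicz inequality from $\mu$-strong convexity to Lemma \ref{single_round_fix}, use $\eta_t \leq 1/l$ to collapse the contraction factor to $(1-\mu\eta_t)$ and the noise coefficient to $\eta_t/2$, and unroll the resulting linear recursion. Your additional remark that $\mu \leq l$ is needed to keep $1-\mu\eta_t \geq 0$ (so the recursion can be multiplied through without flipping the inequality) is a point the paper leaves implicit, but it does not change the argument.
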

\begin{proof}
		See Appendix \ref{a2}.
\end{proof}

According to Lemma \ref{single_round_fix} and Theorem \ref{T_rounds}, we can see that the number of devices $ |\mathcal{B}_t|$ scheduled in each round makes a key contribution to the convergence rate of training. While existing papers \cite{zeng2019energy, Jie2020client, Sun2020ICC} maximize the weighted fraction of devices scheduled over time, we provide a more reasonable objective function based on the theoretical characterization.
We also remark that, although Assumption \ref{ass_unbias} indicates i.i.d. local data, our proposed algorithm can also work well under non-i.i.d. data as being validated in the experiments in Section \ref{sim}.
	
\subsection{Problem Transformation}
As discussed in Section \ref{formulation}, the objective function $\sum_{t=1}^{T} \mathbb{E}[F(\w_t)]- \mathbb{E}[F(\w_{t-1})]$ in problem $\mathcal{P}2$ cannot be expressed explicitly. To make the optimization problem tractable, we substitute the objective function with its convergence bound according to Lemma \ref{single_round_fix}, and formulate an alternative optimization problem:
	\begin{align}
	\mathcal{P}3: \min_{\left\{\sigma_t, ~\beta_{n,t}\right\}_{t=1}^{T}} ~&\sum_{t=1}^{T}   -\eta_t \left(1-\frac{l\eta_t}{2}\right) \lVert \g_t\rVert_2^2 +\frac{l\eta_t^2}{2}\left(\frac{G^2}{L_b\sum_{n=1}^{N}\beta_{n,t}}+\frac{\sigma_0^2 s}{\sigma_t^2\left(\sum_{n=1}^{N}\beta_{n,t}\right)^2}\right) \label{p3_obj}\\
	~\text{s.t.} ~~~~~~ & \text{constraints}~ \eqref{P1_energy}, ~\eqref{P1_cons2}, \nonumber
	\end{align}
where we recall that $\sigma_t$ and $\beta_{n,t}$ are the power scalar and worker scheduling indicator, respectively.

Moreover, due to the unavailability of future system states, we aim to design an online algorithm to solve problem $\mathcal{P}3$, and ignore the impact of current decision on the future system states. 
As the global full gradient $\g_t$ defined on the whole dataset is fixed given the global model vector $\w_{t-1}$ at the start of training round $t$, and the learning rate $\eta_t$ and smoothness parameter $l$ are hyper-parameters, the first term in \eqref{p3_obj} is a constant. Therefore, we ignore this term and transform the optimization problem to
\begin{align}
\mathcal{P}4: \min_{\left\{\sigma_t, ~\beta_{n,t}\right\}_{t=1}^{T}} ~&\sum_{t=1}^{T} \frac{l\eta_t^2}{2}\left(\frac{G^2}{L_b\sum_{n=1}^{N}\beta_{n,t}}+\frac{\sigma_0^2 s}{\sigma_t^2\left(\sum_{n=1}^{N}\beta_{n,t}\right)^2}\right) \label{p4_obj}\\
~\text{s.t.} ~~~~~~ & \text{constraints}~ \eqref{P1_energy}, ~\eqref{P1_cons2}. \nonumber
\end{align} 

\section{Energy-Aware Dynamic Device Scheduling Algorithm} \label{algo}

In this section, we propose an energy-aware dynamic device scheduling algorithm that solves problem $\mathcal{P}4$ in an online fashion. To address the challenge brought by the causality of decision making and communication energy consumption, we first propose two heuristics to estimate the $l_2$-norm of local gradient estimates. Then, we design an online scheduling algorithm based on Lyapunov optimization, and characterize the worst-case performance of the proposed algorithm, which takes the error of energy estimation into consideration. Finally, we provide some practical  considerations for real implementations.

\subsection{Estimating the $l_2$-Norm of Local Gradients} \label{subsec_esti}

We propose two heuristics in the following to estimate the $l_2$-norm of local gradients $\left \lVert \tilde{\g}_{n,t}\right\rVert_2^2$ at the start of each training round $t$.

\noindent\emph{1) Compute the $l_2$-norm of local gradients with a smaller mini-batch (EST-C):}

An additional step is introduced at the start of each training round. To be specific, the PS broadcasts the up-to-date global model $\w_{t-1}$ to all the devices. Then, each device randomly selects a mini-batch $\mathcal{L}'_{n,t}\subseteq\mathcal{D}_n$ with batch size $L_e$ to calculate a local gradient estimate $\tilde{\g}^\text{[est]}_{n,t}$:
\begin{align}
\tilde{\g}^\text{[est]}_{n,t}=\frac{1}{L_e}\sum_{\x \in\mathcal{L}'_{n,t}} \nabla f\left(\w_{t-1},\x\right) . \label{sgd_es}
\end{align}
The computation energy should be modified as $E^{\text{[cp]}}_{n,t}=\beta_{n,t}e_n(L_b-L_e)+ e_nL_e$.

We further assume that each device can upload the value of $\left\lVert \tilde{\g}^\text{[est]}_{n,t}\right\rVert_2^2$ to the PS with negligible cost, which is used as the estimation of the $l_2$-norm of local gradient for device $n$ in round $t$.

Let $G_{n,t}^2$ be the variance of the stochastic gradient on a single data sample for device $n$ in round $t$. Following the same proof of Lemma 1 as \eqref{a1_2}, the exact value of gradient norm and its estimation have the following expectations:
\begin{align}\label{est-c-dev}
&\mathbb{E}\left[\left\lVert \tilde{\g}_{n,t}\right\rVert_2^2 \right]=\left\lVert \g_{n,t}\right\rVert_2^2+\frac{G_{n,t}^2}{L_b}, 
~\mathbb{E}\left[\left\lVert \tilde{\g}^\text{[est]}_{n,t}\right\rVert_2^2 \right]=\left\lVert \g_{n,t}\right\rVert_2^2+\frac{G_{n,t}^2}{L_e}.
\end{align}
As $L_b$ is typically much larger than $L_e$, the expressions above indicate that the estimation may suffer from a large deviation due to the gradient variance.

\noindent\emph{2) Estimate with past information (EST-P):}

A simpler and more straightforward way is to use the most recent $l_2$-norm of local gradient to estimate the current one at each device. 
Let $t_n=\arg\max_t \{t|\beta_{n,t}=1\}$ be the most recent round in which device $n$ is scheduled. The estimated $l_2$-norm of the current local gradient estimate is: 
\begin{align}
	\left \lVert \tilde{\g}_{n,t}^\text{[est]}\right\rVert_2^2 & =	\left \lVert \tilde{\g}_{n,t_n}\right\rVert_2^2.
\end{align}

We will show in Fig. \ref{power_est_MNIST} and Fig. \ref{power_est_CIFAR} in Section \ref{sim} that, under our considered datasets, estimation by EST-P is more accurate due to the strong temporal correlation of gradients. Moreover, compared with EST-C that requires additional computation and communication, EST-P method is computation-free and only needs each device to report the $l_2$-norm of local gradients when it is scheduled. Therefore, we use EST-P to estimate the $l_2$-norm of local gradient in the following device scheduling algorithm.

\subsection{Energy-Aware Dynamic Device Scheduling Algorithm}

To enable online scheduling without any future information while satisfying the total energy constraints of devices, we construct a virtual queue $q_{n,t}$ for each device $n$ to indicate the gap between the cumulative energy consumption till round $t$ and the budget, evolved as
\begin{align} \label{queue}
q_{n,t+1}=\max\left\{ q_{n,t}+ \beta_{n,t}E_{n,t}-\frac{\bar{E}_n}{T},0 \right\},
\end{align}
with initial value $q_{n,1}=0$, $\forall n\in\mathcal{N}$. 

Recall that the causality of device scheduling and energy consumption leads to the unawareness of $E_{n,t}$ at the start of round $t$. Based on the estimated $l_2$-norm of local gradient $\left\lVert \tilde{\g}_{n,t}^{\text{[est]}}\right\rVert_2^2$ by EST-P and the wireless channel gain $\tilde{h}_{n,t}$ observed at the beginning of round $t$, the estimated energy consumption of device $n$ at round $t$, denoted by $\tilde{E}_{n,t}$, is given by
\begin{align}\label{Ent_es}
	\tilde{E}_{n,t}=\frac{\sigma_t^2 }{\tilde{h}_{n,t}^2}\left\lVert \tilde{\g}_{n,t}^{\text{[est]}}\right\rVert_2^2 +e_nL_b. 
\end{align}

Let $U_t\triangleq \frac{l\eta_t^2}{2}\left(\frac{G^2}{L_b\sum_{n=1}^{N}\beta_{n,t}}+ \frac{\sigma_0^2s}{\sigma_t^2\left(\sum_{n=1}^{N}\beta_{n,t}\right)^2}\right)$. Inspired by the drift-plus-penalty algorithm of Lyapunov optimization \cite{neely2010stochastic}, the online scheduling aims to solve the following problem:
\begin{subequations}
	\begin{align}\label{prob_online}
	\mathcal{P}5: ~\min_{\left\{\sigma_t,~\beta_{n,t}\right\}} &VU_t + \sum_{n=1}^{N}\beta_{n,t}q_{n,t}\tilde{E}_{n,t} \\
	~\text{s.t.} ~~~&\sigma_t>0, ~\beta_{n,t}\in\{0,1\},  ~\forall n, 
	\end{align}
\end{subequations}
where $V$ is an adjustable weight parameter to balance the loss $U_t$ and energy consumption.
Compared to the classical drift-plus-penalty algorithm where all the states in the current round are known, the drift term  $q_{n,t}\tilde{E}_{n,t}$ in $ \mathcal{P}5$ is an approximation, and thus we call it \emph{estimated-drift-plus-penalty} algorithm.

Notice that problem $\mathcal{P}5$ is a mixed integer non-linear programming problem, which is still very difficult to solve. 
Meanwhile, existing work has shown that the convergence performance of FEEL with over-the-air gradient aggragation is not very sensitive to the power scalar $\sigma_t$, as long as the received SNR or the power limit of each device is larger than a threshold \cite{mma2020machine}. Therefore, we further decouple the optimization variables in $\mathcal{P}5$ by considering the power scalar $\sigma_t$ as a hyper-parameter, and then develop the optimal solution to the online device scheduling problem.

\noindent\emph{1) Received SNR and Power Scalar $\sigma_t$}

The power scalar $\sigma_t$ is chosen as follows.
In the $t$-th round, the expected received SNR at the PS side is denoted by $\gamma_t$, given by
\begin{align}
\gamma_t=\mathbb{E}\left[\frac{\left\lVert\sigma_t\sum_{n\in\mathcal{B}_t}\tilde{\g}_{n,t}\right\rVert_2^2}{\lVert \z_t \rVert_2^2 } \right]
=\frac{\sigma_t^2}{\sigma_0^2s}\left\lVert\sum_{n\in\mathcal{B}_t}\tilde{\g}_{n,t}\right\rVert_2^2.
\end{align}

Let $\gamma_0$ be a pre-defined SNR threshold. The power scalar is set according to 
\begin{align} \label{sigma_t}
	\sigma_t =~\frac{\gamma_0\sigma_0^2\sqrt{s}}{\min_{n\in\mathcal{N}}\left \lVert \tilde{\g}_{n,t}\right\rVert_2 }
	\approx ~\frac{\gamma_0\sigma_0^2\sqrt{s}}{\min_{n\in\mathcal{N}}\left \lVert \tilde{\g}_{n,t}^\text{[est]}\right\rVert_2 },
\end{align}
such that the expectation of the received SNR can meet the threshold $\gamma_0$ even in the worst case when a single device is scheduled. 
Recall that $\left \lVert \tilde{\g}_{n,t}\right\rVert_2 $ is unknown and thus approximated by $\left \lVert \tilde{\g}_{n,t}^\text{[est]}\right\rVert_2$ according to the EST-P method.

\noindent\emph{2) Optimal Online Device Scheduling}

Given the power scalar $\sigma_t$, the device scheduling $\{\beta_{n,t}\}$ in the $t$-th round aims to solve
\begin{subequations}
	\begin{align}\label{prob_worker}
	\mathcal{P}6: ~\min_{\left\{\beta_{n,t}\right\}} ~&VU_t + \sum_{n=1}^{N}\beta_{n,t}q_{n,t}\tilde{E}_{n,t} \\
	~\text{s.t.} ~~~&\beta_{n,t}\in\{0,1\},  ~\forall n.
	\end{align}
\end{subequations}

An optimal solution to problem $\mathcal{P}6$ is shown in Algorithm \ref{Algo_P6}.
In Line 1, we sort $\mathcal{C}_t=\left\{q_{n,t}\tilde{E}_{n,t}, \forall n\right\}$ in the ascending order, and let $C_t^{[m]}$ be the $m$-th smallest value of $\mathcal{C}_t$. 
Many sorting algorithms such as Heapsort or Mergesort can be used, with worst-case complexity $O(N\log N)$.
In Lines 2-4, we iterate over the possible number of scheduled devices $k\in\{1,\ldots,N\}$, and calculate the corresponding minimum estimated-drift-plus-penalty $v_t(k)$ according to
\begin{align} \label{v_k}
	v_t(k)\triangleq \frac{l\eta_t^2}{2}\left(\frac{G^2}{L_bk}+ \frac{\sigma_0^2s}{\sigma_t^2k^2}\right) + \sum_{n=1}^{k}C_t^{[k]}.
\end{align}
The optimal number of devices $k^*$ to be scheduled is obtained by finding the minimum $v_t(k)$ according to Line 5, and $k^*$ devices with smallest estimated drift $q_{n,t}\tilde{E}_{n,t}$ are scheduled, as shown Lines 6-8. Besides Line 1, all the other steps are with complexity $O(N)$, and thus the complexity of Algorithm \ref{Algo_P6} is $O(N\log N)$.

\begin{algorithm} [!t] 
	\caption{Optimal Online Device Scheduling to $\mathcal{P}6$}  \label{Algo_P6}
	\begin{algorithmic}[1] 
		\State Sort $\mathcal{C}_t=\left\{q_{n,t}\tilde{E}_{n,t}, \forall n\right\}$ and let $C_t^{[m]}$ be the $m$-th smallest value of $\mathcal{C}_t$.
		\For {$k=1,\ldots,N$}
		\State Calculate $v_t(k)$ according to \eqref{v_k}.
		\EndFor
		\State Get $k^*=\argmin_k \{v_t(k) \mid k=1,\ldots,N\}$.
		\For {$n=1,\ldots,N$}
		\State Let $\beta_{n,t}=1$ if $q_{n,t}\tilde{E}_{n,t}\leq C_t^{[k^*]}$, and $\beta_{n,t}=0$ otherwise.
		\EndFor
	\end{algorithmic}
\end{algorithm}

\begin{algorithm} [t] 
	\caption{Energy-Aware Dynamic Device Scheduling Algorithm}  \label{Algo_dyn}
	\begin{algorithmic}[1] 
		\State \textbf{Initialization}: initialize global model $\w_0$. Each device $n$ runs local SGD according to \eqref{sgd_local} to report $\left \lVert \tilde{\g}_{n,0}\right\rVert_2^2$ to the PS, and let $q_{n,1}=0$.
		\For {$t=1,\ldots, T$}
		\State The PS set $\sigma_t$ according to \eqref{sigma_t}, acquires channel gains $\tilde{h}_{n,t}$ and calculates the estimated energy consumption $\tilde{E}_{n,t}$ according to \eqref{Ent_es} for all devices. 
		\State The PS schedules a subset of devices $\mathcal{B}_t$ by solving $\mathcal{P}6$ according to Algorithm \ref{Algo_P6}.
		\State The PS broadcasts $\w_{t-1}$ and $\sigma_t$ to the scheduled devices $n\in\mathcal{B}_t$. 
		\State Each scheduled device $n\in\mathcal{B}_t$ updates local gradient $\tilde{\g}_{n,t}$ according to \eqref{sgd_local}, and transmits $\frac{\sigma_t }{h_{n,t}}\tilde{\g}_{n,t}$ simultaneously with all the other scheduled devices.
		\State The PS receives $\y_t$ and updates the global model $\w_t$ according to \eqref{global_model}.
		\State Each scheduled device $n\in\mathcal{B}_t$ reports $E_{n,t}$ and the PS updates the virtual queue $q_{n,t}$ for all devices according to \eqref{queue}.
		\EndFor
	\end{algorithmic}
\end{algorithm}

\noindent\emph{3) The Complete Algorithm}	

The proposed energy-aware dynamic device scheduling algorithm is summarized in Algorithm \ref{Algo_dyn}.
In the $t$-th training round, the PS makes device scheduling decision by solving $\mathcal{P}6$ based on the estimated energy consumption and the virtual queue, which is run in an online fashion without any future information.
The weight parameter $V$ and the virtual queue states $\{q_{n,t}, \forall n\}$ jointly balance the training gain of the FEEL task and the energy consumption of devices. In particular, a larger $V$ puts more emphasis on scheduling more devices so as to accelerate the convergence rate. Meanwhile, a larger $q_{n,t}$ indicates that the cumulative energy consumption of device $n$ till the current round far exceeds the budget, so that the device tends to save energy. As shown in Algorithm \ref{Algo_P6}, the optimal solution to $\mathcal{P}6$ also indicates that devices with smaller values of $q_{n,t}\tilde{E}_{n,t}$ are always scheduled first, as their energy is relatively sufficient.

Then, the up-to-date global model vector  $\w_{t-1}$ is broadcast to the scheduled devices, who run local SGD to compute local gradients $\tilde{\g}_{n,t}$ in parallel.
After computation, local gradients are aggregated over-the-air and the global model is updated by the PS. Finally, the PS collects the actual energy consumption of each scheduled device, which also contains the information of local gradient norm $\left \lVert \tilde{\g}_{n,t}\right\rVert_2$, and updates the virtual queue states for all the devices to guide the scheduling decision in the next training round.

\subsection{Performance Analysis} \label{Lyapunov_analysis}	

The performance of the proposed dynamic device scheduling algorithm is characterized by comparing with its optimal offline counterpart, which is achieved by solving the optimal device scheduling $\{\beta_{n,t}^*\}$ to problem $\mathcal{P}4$ while taking $\{\sigma_t\}$ as a pre-defined hyper-parameter sequence. 
Let $\sum_{t=1}^{T}  U_t^*$ be the offline optimal cumulative loss of $\mathcal{P}4$ by scheduling device sequence $\{\beta_{n,t}^*\}$, and define $\sum_{t=1}^{T}  U_t^\ddagger$ as the cumulative loss of the proposed algorithm, which is achieved by solving the online device scheduling problem $\mathcal{P}6$ in each round. To enable the theoretical analysis, we neglect the impact of current scheduling decision on the future gradient norm for the offline counterpart. 
Meanwhile, we do not limit the distributions of wireless channel or local gradient norms, which can be non-stationary over time.

The performance guarantee of the proposed algorithm is shown in the following theorem.

\begin{theorem} \label{theorem_algo}
	Compared to the offline optimal solution, the cumulative loss of Algorithm \ref{Algo_dyn} can be bounded by
	\begin{align} \label{bound_Ut}
		\sum_{t=1}^{T}  U_t^\ddagger \leq  \sum_{t=1}^{T}  U_t^*+\frac{\theta_0T^2+T(T-1)\delta_0\sum_{n=1}^{N} \theta_n}{V},
	\end{align}
	and the total energy consumption of Algorithm \ref{Algo_dyn} can be bounded by 
	\begin{align}\label{bound_energy}
		\sum_{t=1}^{T}\beta_{n,t}E_{n,t} \leq \bar{E}_n +\sqrt{2V\sum_{t=1}^{T}  U_t^*+2\theta_0T^2+2T(T-1)\delta_0\sum_{n=1}^{N} \theta_n}~,
	\end{align}
	where $\delta_0\triangleq\max_{\{n,t\}}\left\{\left|\tilde{E}_{n,t}-E_{n,t}\right|\right\}$, $\theta_0\triangleq\sum_{n=1}^{N}\frac{1}{2}\theta_n^2$ and $\theta_n\triangleq\max_t \left\{\left| E_{n,t}-\frac{\bar{E}_n}{T}\right| \right\}$.
\end{theorem}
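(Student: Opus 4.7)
The plan is to follow the drift-plus-penalty template of Lyapunov optimization, modified to handle the fact that Algorithm~\ref{Algo_P6} minimizes an \emph{estimated} drift $\sum_n \beta_{n,t}q_{n,t}\tilde{E}_{n,t}$ rather than the true drift. First I would define the Lyapunov function $L(\q_t)\triangleq \tfrac{1}{2}\sum_{n=1}^N q_{n,t}^2$ and compute the one-slot drift $\Delta_t\triangleq L(\q_{t+1})-L(\q_t)$. Using $(\max\{a+x,0\})^2\leq (a+x)^2 = a^2+2ax+x^2$ together with the per-slot deviation bound $|\beta_{n,t}E_{n,t}-\bar{E}_n/T|\leq\theta_n$, this produces the standard drift inequality
\[
\Delta_t \;\leq\; \theta_0 \;+\; \sum_{n=1}^N q_{n,t}\!\left(\beta_{n,t}E_{n,t}-\frac{\bar{E}_n}{T}\right).
\]

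The key step is then to add $VU_t^\ddagger$ to both sides and run a three-link chain to exploit the online optimality of $\beta^\ddagger$ without being blocked by the $\tilde{E}$--$E$ mismatch: (i) replace $E_{n,t}$ by $\tilde{E}_{n,t}$ for $\beta^\ddagger$, at cost $\delta_0 \sum_n q_{n,t}$; (ii) invoke the per-round optimality of $\beta^\ddagger$ in $\mathcal{P}6$ against the offline oracle $\beta^*$ (using the same $\sigma_t$, which is a common hyperparameter); (iii) swap $\tilde{E}_{n,t}$ back to $E_{n,t}$ for $\beta^*$, again at cost $\delta_0\sum_n q_{n,t}$. The output is a per-slot inequality
\[
\Delta_t + VU_t^\ddagger \;\leq\; \theta_0 + VU_t^* + \sum_{n=1}^N q_{n,t}\!\left(\beta_{n,t}^* E_{n,t}-\frac{\bar{E}_n}{T}\right) + 2\delta_0\sum_{n=1}^N q_{n,t}.
\]

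Summing over $t=1,\dots,T$, dropping $L(\q_{T+1})-L(\q_1) = L(\q_{T+1})\geq 0$, and dividing by $V$ yields \eqref{bound_Ut} once the residuals are closed. The auxiliary estimate I would use is the linear queue-growth bound $q_{n,t}\leq (t-1)\theta_n$, which follows by induction from $q_{n,1}=0$ and $|\beta_{n,t}E_{n,t}-\bar{E}_n/T|\leq\theta_n$ (the $\max\{\cdot,0\}$ only shrinks the queue). Combined with $|\beta_{n,t}^*E_{n,t}-\bar{E}_n/T|\leq \theta_n$, this gives $\sum_t\sum_n q_{n,t}(\beta_{n,t}^*E_{n,t}-\bar{E}_n/T)\leq T(T-1)\theta_0$ and $2\delta_0\sum_t\sum_n q_{n,t}\leq T(T-1)\delta_0\sum_n \theta_n$; adding the $T\theta_0$ collected from the $T$ copies of the drift constant collapses into the $T^2\theta_0$ coefficient in \eqref{bound_Ut}. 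For the energy inequality \eqref{bound_energy}, I would unroll $q_{n,t+1}\geq q_{n,t}+\beta_{n,t}E_{n,t}-\bar{E}_n/T$ to obtain $q_{n,T+1}\geq \sum_t\beta_{n,t}E_{n,t}-\bar{E}_n$, invoke $q_{n,T+1}^2\leq 2L(\q_{T+1})$, plug in the upper bound on $L(\q_{T+1})$ from the same telescoped inequality (now keeping $V\sum_t U_t^*$ explicit and discarding $V\sum_t U_t^\ddagger\geq 0$ on the left), and take a square root. The main obstacle I anticipate is the bookkeeping around the double $\delta_0$ substitution: the algorithm's optimality certificate is phrased in estimated quantities while both the drift inequality and the benchmark live in true quantities, so one must translate twice, and ensuring that the factor of two lands exactly inside the coefficient $T(T-1)\delta_0\sum_n\theta_n$ (rather than off by a stray factor or an additive $\theta_n^2$ residual) is the only delicate part of the argument.
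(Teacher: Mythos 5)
Your proposal is correct and follows essentially the same estimated-drift-plus-penalty argument as the paper, including the drift inequality with constant $\theta_0$, the linear queue-growth bound $q_{n,t}\le (t-1)\theta_n$, the resulting coefficients $\theta_0 T^2$ and $T(T-1)\delta_0\sum_n\theta_n$, and the final square-root step $q_{n,T+1}\le\sqrt{2\Delta_T}$ for the energy bound. The only cosmetic difference is that the paper routes the per-round optimality comparison through an intermediate policy (the classical drift-plus-penalty minimizer with the true energies) before reaching the offline optimum, whereas you compare the estimated objectives of $\beta^\ddagger$ and $\beta^*$ directly; both incur the same $2\delta_0\sum_{n}q_{n,t}$ per-round translation cost and yield identical constants.
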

\begin{proof}
	See Appendix \ref{a3}.
\end{proof}

Theorem \ref{theorem_algo} shows that, the training performance of the proposed energy-aware dynamic device scheduling algorithm can be bounded with respect to its optimal offline counterpart, while the deviation between the cumulative energy consumption of each device and its budget is also bounded. The worst-case performance can be improved by reducing the upper bound of the energy overuse $\theta_n$ and the maximum energy estimation error $\delta_0$. Moreover, the trade-off between the training performance of the FEEL task and maximum energy consumption of each device can be balanced by the weight parameter $V$.

We also remark here that, compared to the state-of-the-art that also applies Lyapunov optimization to solve scheduling problem under energy constraints \cite{Sun2020ICC, Jie2020client, sun2017emm}, our analysis further shows the impact of estimation error on the performance bound.

\subsection{Implementation Issues}

To enable the efficient implementation of the proposed algorithm in s real system, we provide some practical considerations as follows.

\noindent\emph{1) Communication Rescheduling}

The key motivation of rescheduling is to avoid using significantly more energy than expected when the estimation error of $\tilde{E}_{n,t}$ is large.
To be specific, after local gradient computation, each scheduled device can learn its exact energy consumption $E_{n,t}$ by calculating the local gradient norm $\left \lVert \tilde{\g}_{n,t}\right\rVert_2^2$ and acquiring the accurate channel gain $h_{n,t}$. 
If $E_{n,t}-\tilde{E}_{n,t}\leq\delta_h $, where $\delta_h>0$ is a given threshold, then the device is scheduled for gradient aggregation. Otherwise, the device backs off from the communication step.

\noindent\emph{2) Minimum Value of Virtual Queue}

The typical evolution of virtual queue is given in \eqref{queue}, in which the minimum queue value is set to $0$. In problem $\mathcal{P}6$, $q_{n,t}=0$ indicates that the energy consumption is not considered in the current scheduling round, and thus the device is scheduled. However, the energy consumption $E_{n,t}$ might be large, leading to a large deviation $\theta_n$ and thus a poor worst-case performance.
To avoid such cases, we instead set $q_{\text{min}}>0$ as the minimum value of the virtual queue in practice. 

\noindent\emph{3) Estimations of Smoothness Parameter $l$ and Variance Bound $G^2$}

Our algorithm is designed based on the convergence analysis under Assumptions in Section \ref{conv}. These hyper-parameters should be estimated in practice. According to the definition of smoothness, $l$ is estimated by the maximum value of $\frac{\left \lVert \tilde{\g}_{n,t}-\tilde{\g}_{n,t-1}\right\rVert}{\left \lVert\w_{t-1}-\w_{t-2}\right\rVert}$ during training, while each device can count the variance of local gradients to set a reasonable variance bound $G^2$.

\section{Experiments} \label{sim}

In this section, we evaluate the proposed energy-aware dynamic device scheduling algorithm for an image classification task using both MNIST\footnote{http://yann.lecun.com/exdb/mnist/} and CIFAR-10\footnote{https://www.cs.toronto.edu/~kriz/cifar.html} datasets.
We consider $N=10$ devices and both i.i.d. and non-i.i.d. datasets on devices. 
For the i.i.d. case, the training dataset of MNIST with 60000 samples (or CIFAR-10 with 50000 samples) is randomly partitioned into $N$ disjoint subsets, and each device holds one subset.
For the non-i.i.d. case, we sort the data samples by their labels, and each device holds a disjoint subset of data with $m$ labels (represented by `non-i.i.d. ($m$)' in the following). Note that the data distributions are more skewed for smaller $m$, and they become i.i.d. when $m$ is equal to the total number of classes in the dataset.

For MNIST, we train a multilayer perceptron (MLP) which has a 784-unit input layer with ReLU activation, a 64-unit hidden layer, and a 10-unit softmax output layer, with 50890 parameters in total. 
The total number of rounds is set to $T=200$, and 10 local iterations are carried out per round with batch size $L_b=64$. 
In each round, the total computation energy is $1\mathrm{J}$ for each device.
For CIFAR-10, we train a convolutional neural network (CNN) with the following structure:
two $3\times 3$ convolution layers each with $32$ channels and followed by a $2\times 2$ max pooling layer, two $3\times 3$ convolution layers each with $64$ channels and followed by a $2\times 2$ max pooling layer, a fully connected layer with 120 units, and finally a 10-unit softmax output layer. Each convolution or fully connected layer is activated by ReLU, and the total number of model parameters is 258898. We train the model for $T=10000$ rounds, and one mini-batch is run per round with batch size $L_b=64$. Local computation energy per round per device is set to $10\mathrm{J}$.

For both MNIST and CIFAR-10, the learning rate $\eta_t$ is set to $0.05$, $\forall t$, a momentum of $0.9$ is adopted, and cross entropy is adopted as the loss function. The wireless channel follows Rayleigh fading with scale parameter 1, and by default we assume that the accurate channel gain can be observed, i.e., $\tilde{h}_{n,t}=h_{n,t}$. The variance of channel noise is $\sigma_0^2=10^{-6}$. The power scalar is selected according to \eqref{sigma_t}, where the default SNR threshold is $\gamma_0=5$. For the dynamic scheduling algorithm, the minimum value of virtual queue is $q_{\text{min}}=0.1$, and the maximum estimation error $\delta_h=0.5\tilde{E}_{n,t}$ is allowed for communication reschedule.

\subsection{$l_2$-Norm of Local Gradients}
\begin{figure*}[!t]
	\centering
	\vspace{-5mm}	
	\subfigure[I.i.d. local data.]{\label{power_est_MNIST_iid}			
		\includegraphics[width=0.4\textwidth]{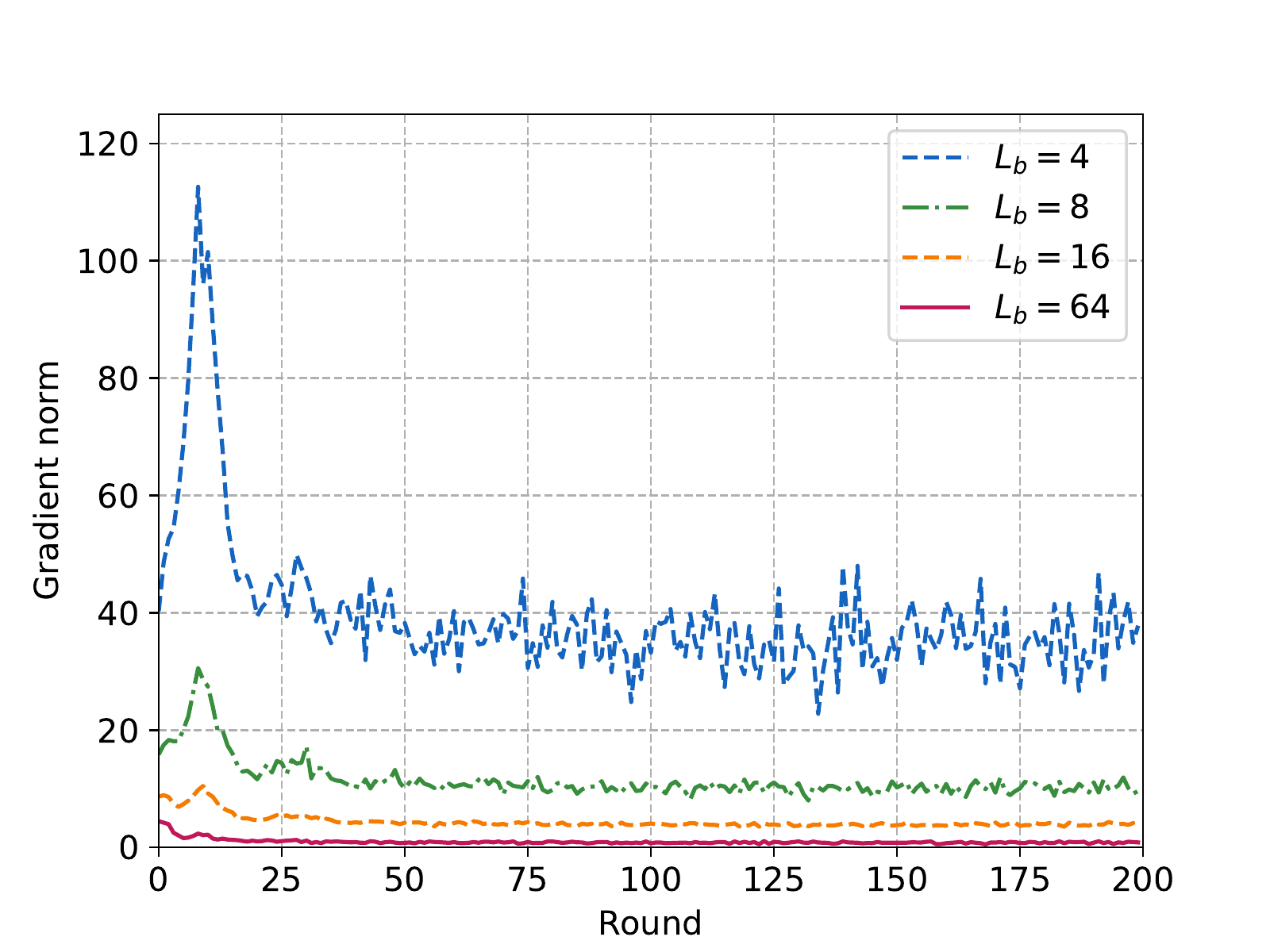} 	\vspace{-1mm}}		
	\hspace{8mm}
	\subfigure[Non-i.i.d. ($m=1$).]{\label{power_est_MNIST_label1}	
		\includegraphics[width=0.4\textwidth]{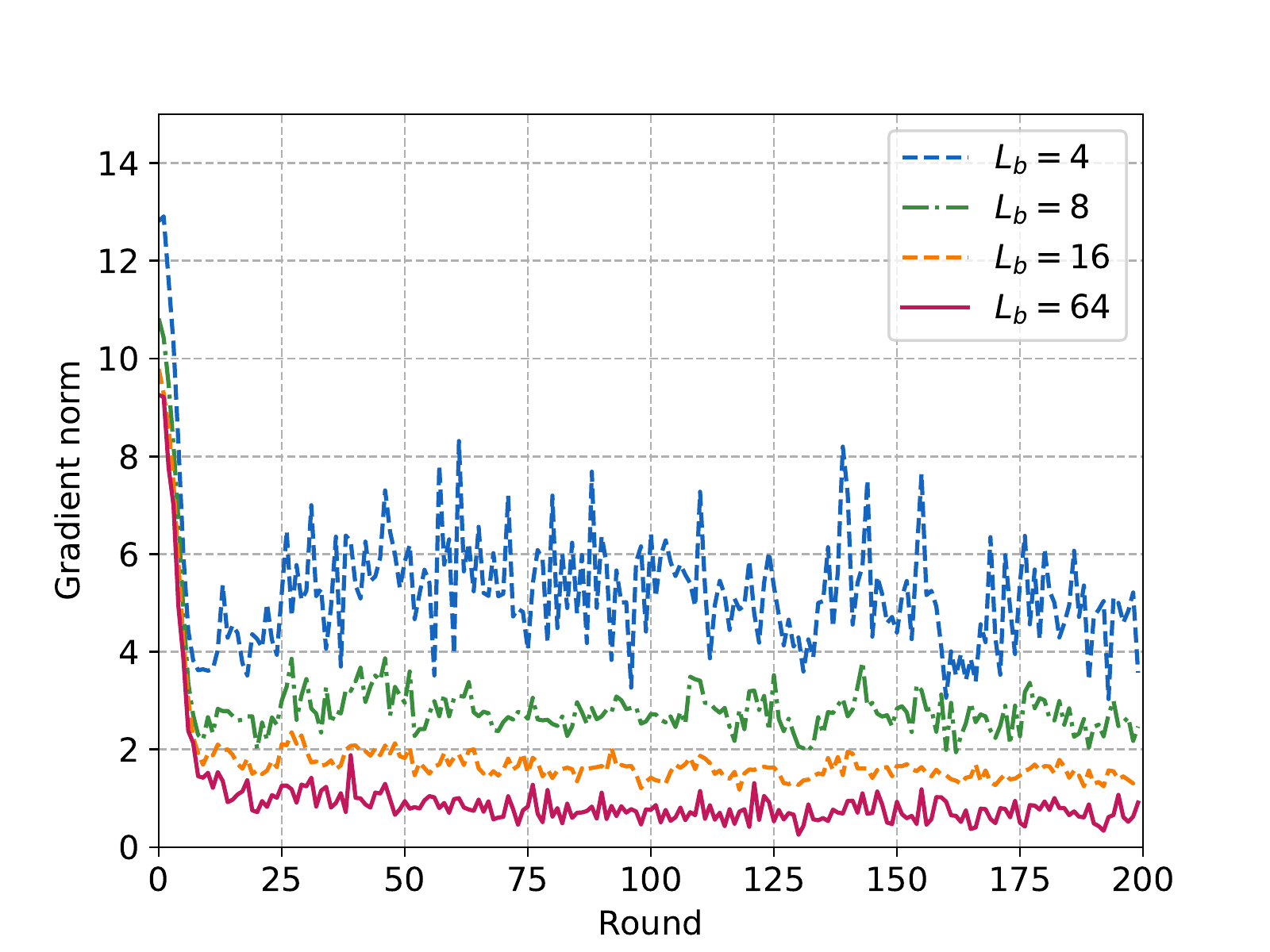} 	\vspace{-1mm}}  
	\vspace{-3mm}
	\caption{The $l_2$-norm of local gradients and their estimated values on the MNIST dataset.}
	\label{power_est_MNIST}
\end{figure*}

\begin{figure*}[!t]
	\centering	
	\vspace{-8mm}
	\subfigure[I.i.d. local data.]{\label{power_est_CIFAR_iid}			
		\includegraphics[width=0.4\textwidth]{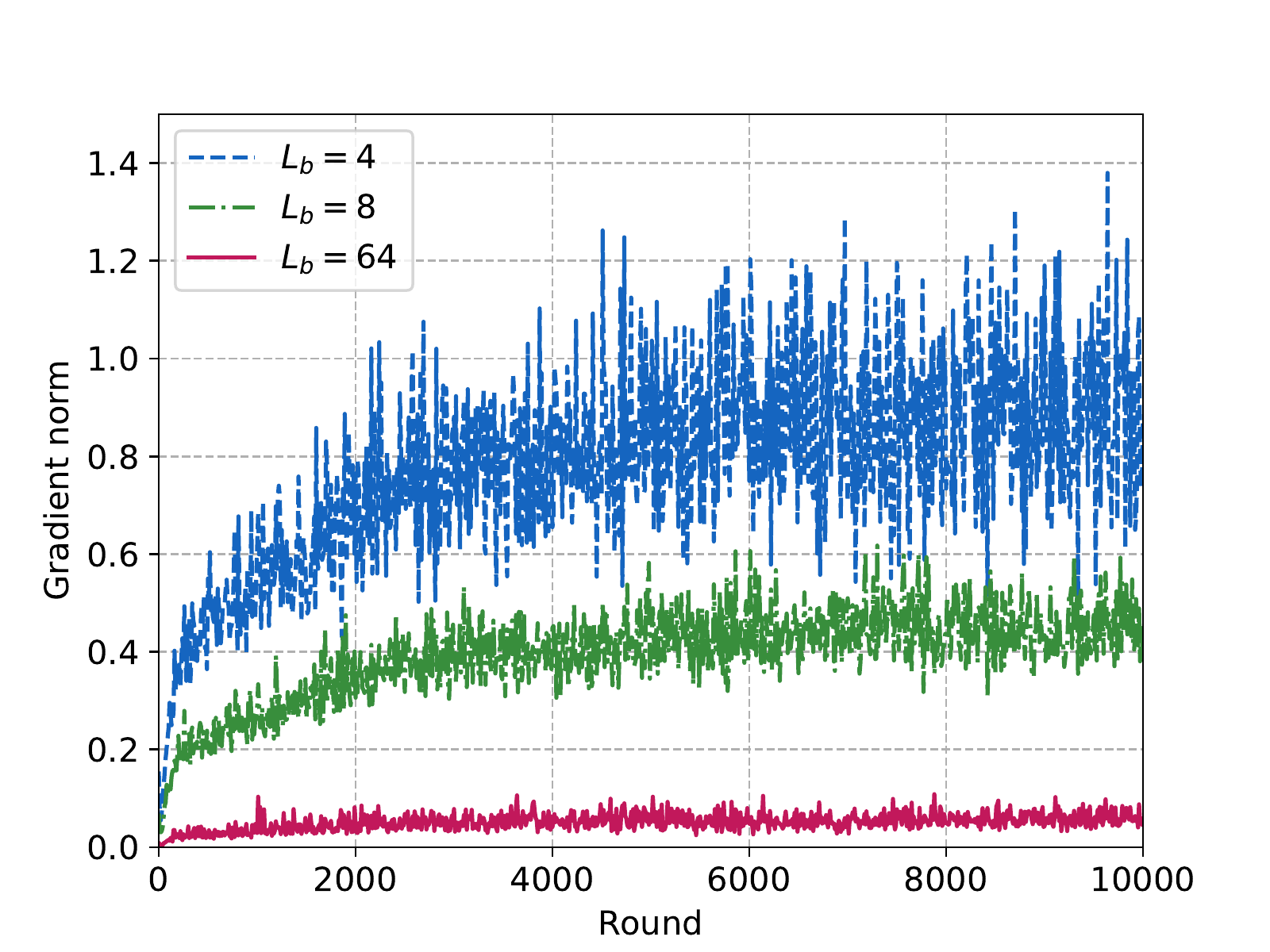}}		
	\hspace{8mm}
	\subfigure[Non-i.i.d. ($m=2$).]{\label{power_est_CIFAR_label2}	
		\includegraphics[width=0.4\textwidth]{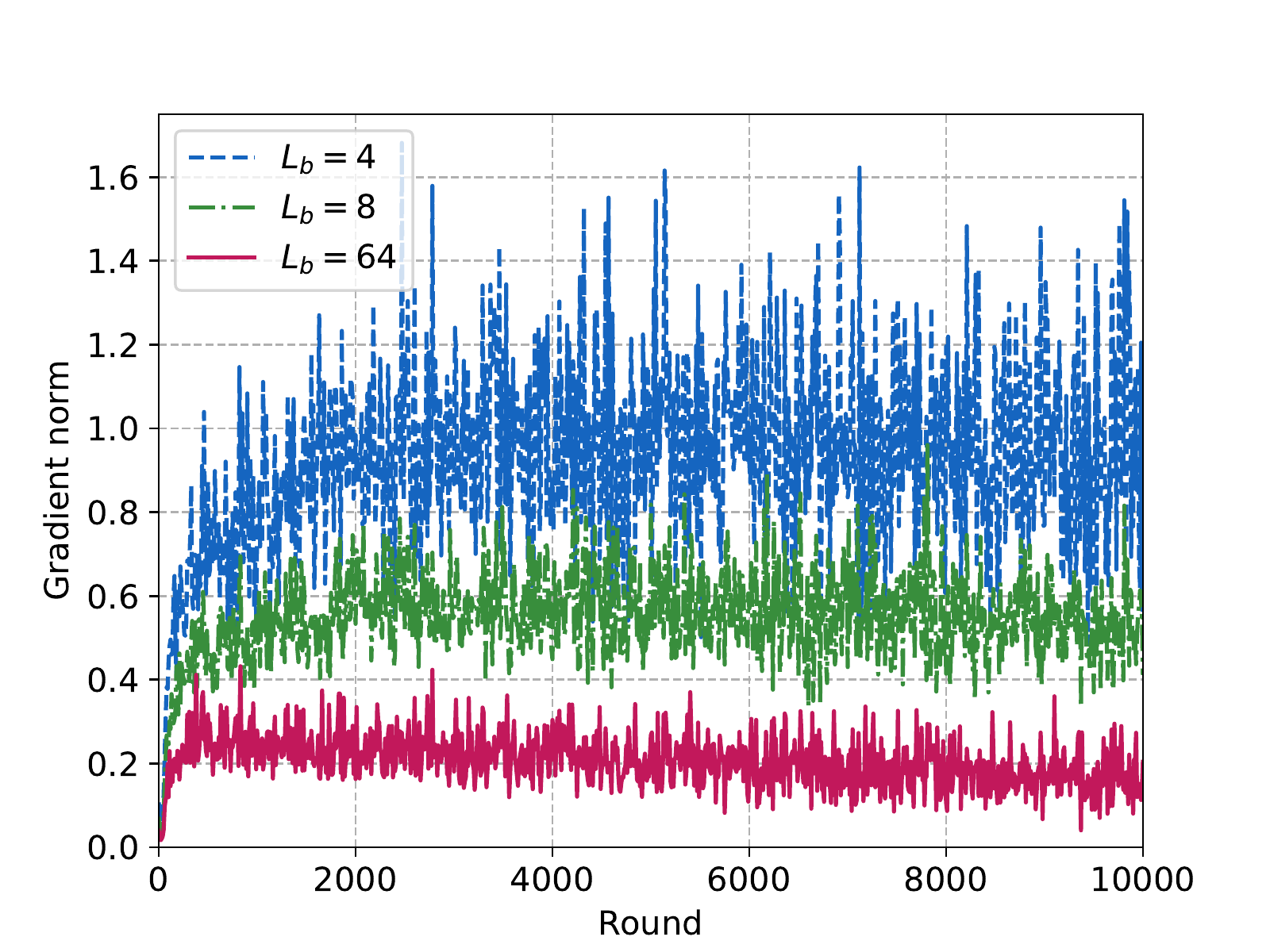}}  
	\vspace{-3mm}
	\caption{The $l_2$-norm of local gradients and their estimated values on the CIFAR-10 dataset.}
	\label{power_est_CIFAR}
\end{figure*}

In Fig. \ref{power_est_MNIST} and Fig. \ref{power_est_CIFAR}, we first evaluate the EST-C and EST-P methods proposed in Section \ref{subsec_esti} that estimate the $l_2$-norm of local gradient, by observing the temporal variations of the gradients. To eliminate the impact of device scheduling, we do not limit the energy consumption and all devices are scheduled. The batch size $L_b$ used for the model training is $64$. In each round, each device further computes its local gradient with smaller batch sizes $L_b=4$, $8$ and $16$ and records the corresponding estimated gradient norm, which is adopted by the EST-C method as $\left\lVert\tilde{\g}^\text{[est]}_{n,t}\right\rVert_2^2$. 
For the EST-P method, $\left\lVert\tilde{\g}^\text{[est]}_{n,t}\right\rVert_2^2$ will be given the value of the $l_2$-norm of gradients with $L_b=64$ at a certain round before $t$. Each curve is averaged over $50$ and $20$ runs for MNIST and CIFAR-10, respectively.

As shown in Fig. \ref{power_est_MNIST} and Fig. \ref{power_est_CIFAR}, the gradient norms achieved by different batch sizes are highly varying, and a smaller batch size yields a higher $l_2$-norm of gradient due to the non-negligible gradient variance, which is consistent with the analysis in \eqref{est-c-dev}. This result indicates that the EST-C method cannot provide an accurate estimation of gradient norm. 
Meanwhile, with a fixed batch size, such as $L_b=64$, the gradient norm has a strong temporal correlation. Therefore, the EST-P method can provide a much better estimate of the gradient norm, which is embedded in the proposed dynamic device scheduling algorithm.

\subsection{Performance of the Proposed Device Scheduling Algorithm}

\begin{figure*}[!t]
	\centering	
	\vspace{-3mm}
	\subfigure[Accuracy on test dataset.]{\label{algo_MNIST_acc}			
		\includegraphics[width=0.56\textwidth]{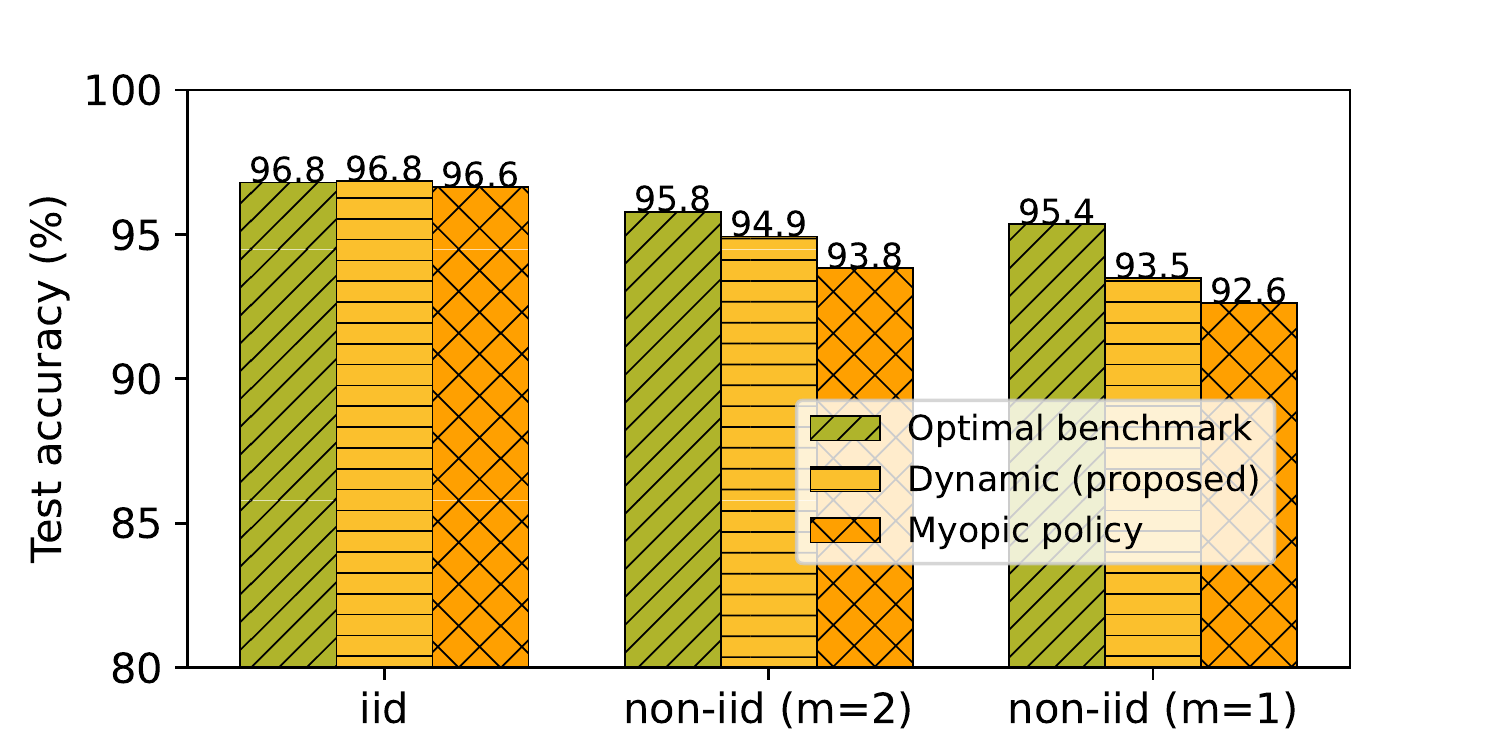}}		
	\subfigure[Unified cumulative energy usage.]{\label{algo_MNIST_energy}	
		\includegraphics[width=0.38\textwidth]{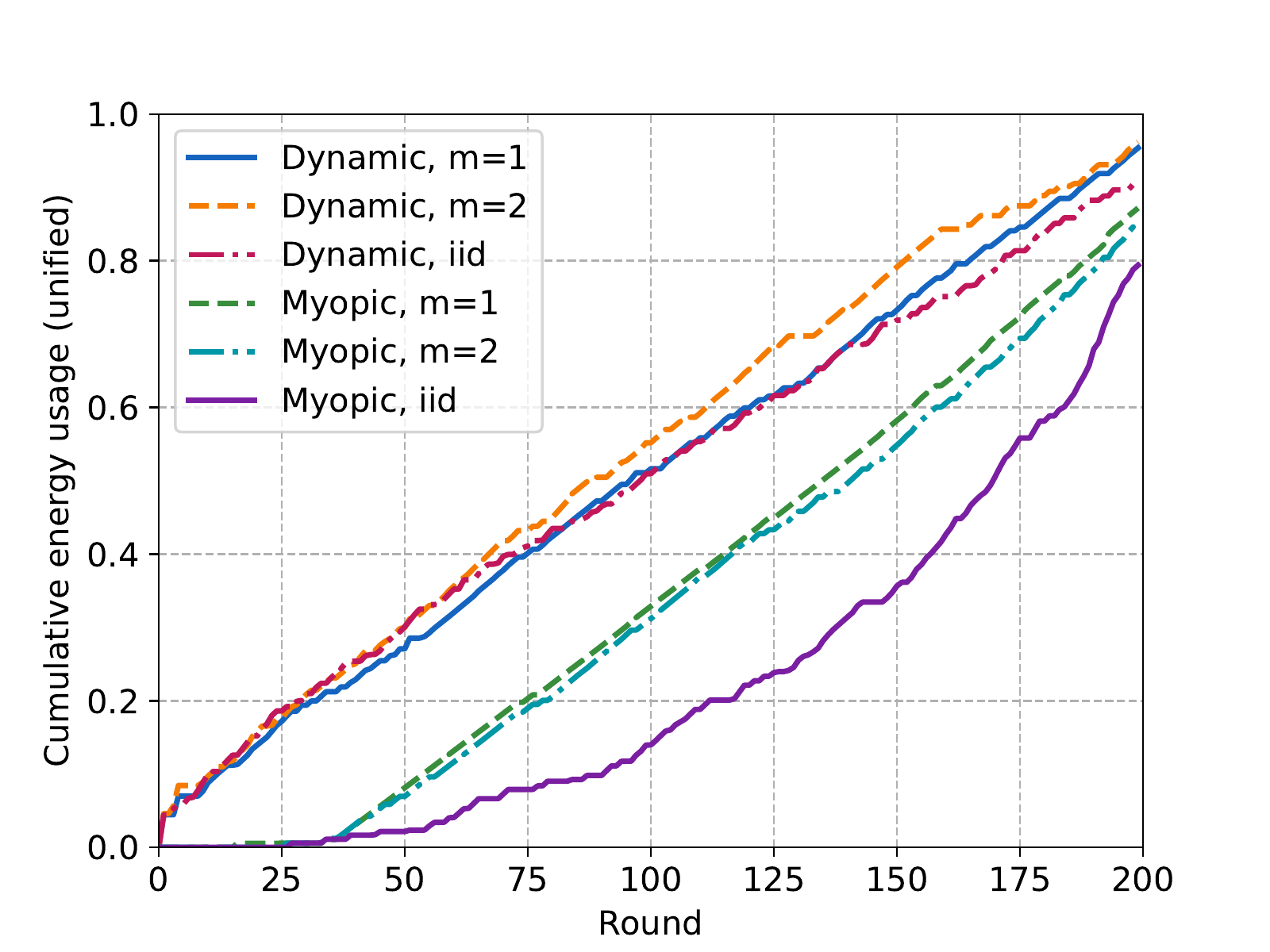}}  
	\vspace{-3mm}
	\caption{Performance of the proposed dynamic scheduling algorithm and benchmarks on MNIST. }
	\label{algo_MNIST}
\end{figure*}

\begin{figure*}[!t]
	\centering	
	\vspace{-8mm}
	\subfigure[Accuracy on test dataset.]{\label{algo_CIFAR_acc}			
		\includegraphics[width=0.56\textwidth]{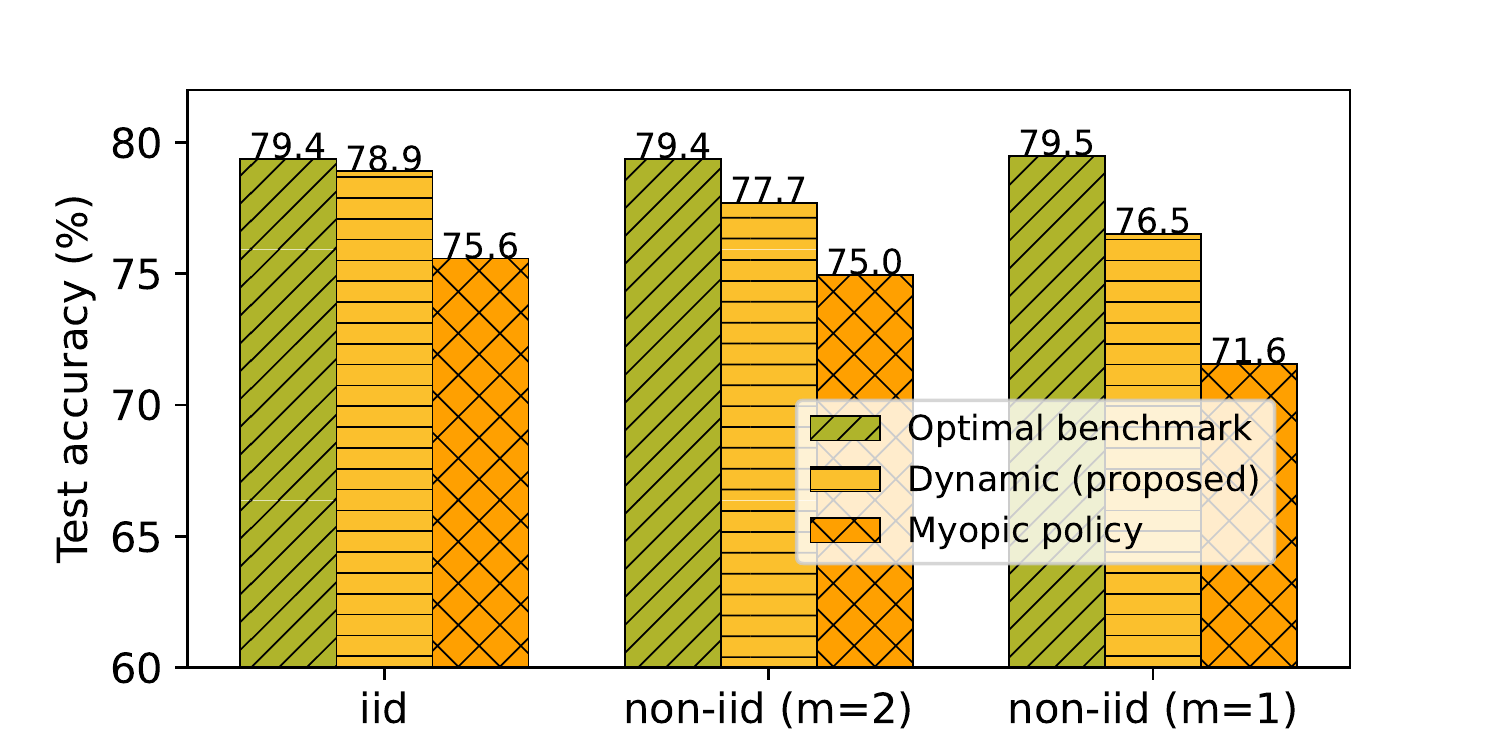}}		
	\subfigure[Unified cumulative energy usage.]{\label{algo_CIFAR_energy}	
		\includegraphics[width=0.38\textwidth]{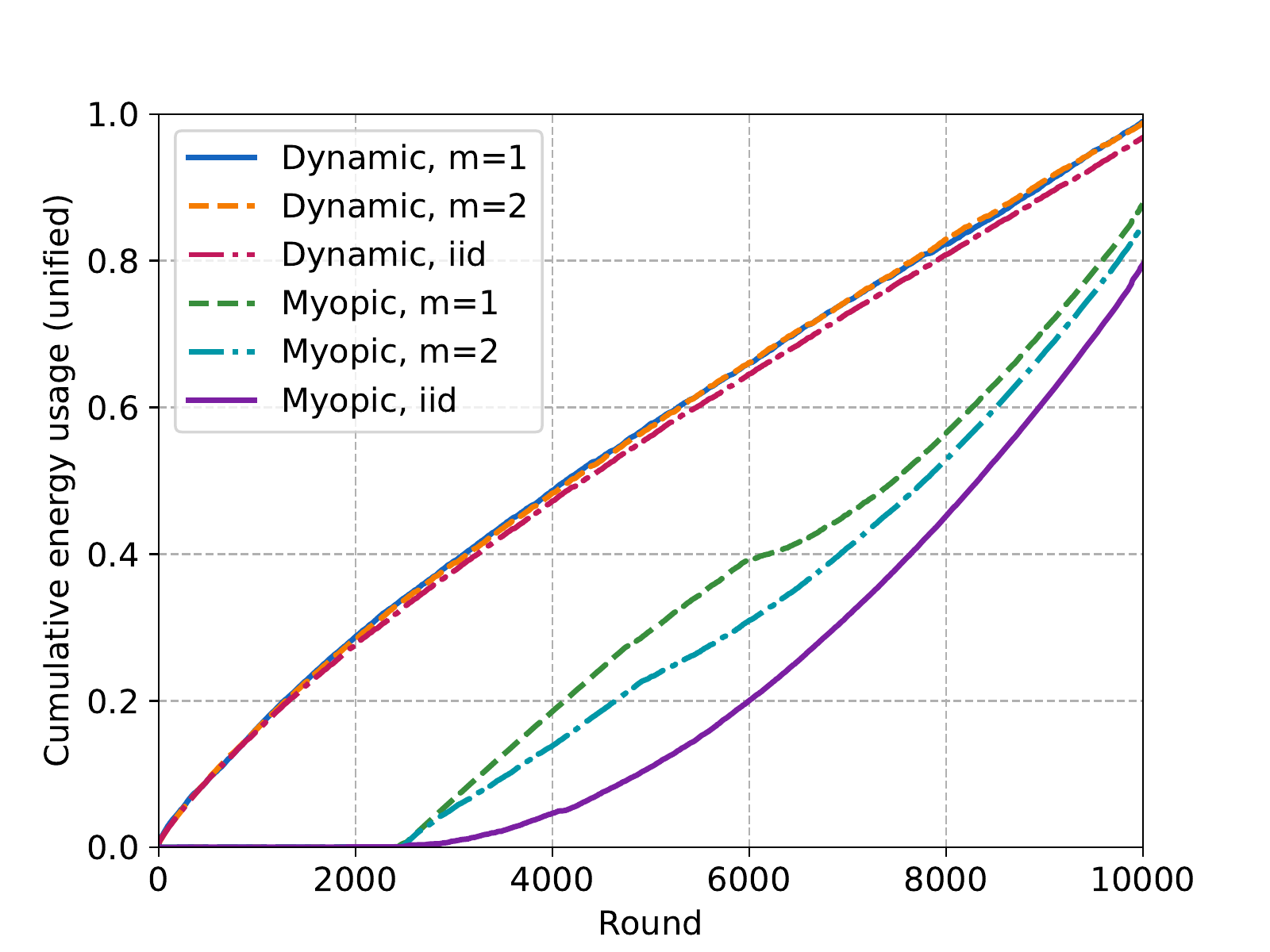}}  
	\vspace{-3mm}
	\caption{Performance of the proposed dynamic scheduling algorithm and benchmarks on CIFAR-10. }
	\label{algo_CIFAR}
\end{figure*}
We compare the performance of the proposed scheduling algorithm with two benchmarks:

\emph{1) Optimal benchmark:} Devices do not have energy limitations, so that all of them participate in each training round.

\emph{2) Myopic policy:} For each device $n$, the maximum energy that can be used in round $t$ is given by the remaining energy divided by the remaining number of rounds, i.e., $\frac{\bar{E}_n-\sum_{\tau=1}^{t-1}\beta_{n,t}E_{n,t}}{T-t+1}$.

In Fig. \ref{algo_MNIST}, we compare the training performance and energy consumption of the proposed dynamic scheduling algorithm with the optimal and myopic benchmarks on MNIST. Let $\bar{E}=1\mathrm{J}$ be the energy budget per round, and the total energy budget of each device is $\bar{E}_n=T\bar{E}, \forall n$. For non-i.i.d. data with 1 label per device, the weight parameter is $V=5\times10^7$, while for the other two cases, $V=10^8$.
The training performance is characterized by the accuracy of the MLP model on the test dataset, as shown in Fig. \ref{algo_MNIST_acc}. Results show that our proposed dynamic scheduling algorithm achieves the optimal accuracy under i.i.d. data, and always outperforms the myopic policy.
The maximum value of the unified cumulative energy usage across devices till the $t$-th round, given by $\max_{n\in\mathcal{N}} ~\frac{\sum_{\tau=1}^{t} \beta_{n,\tau}E_{n,\tau} }{t\bar{E}}$, is plotted in Fig. \ref{algo_MNIST_energy}. For the myopic policy, the energy  required for computation and communication exceeds the budget at the beginning of training, thus no devices can be scheduled. However, our proposed algorithm enables devices to use energy in a more flexible way, thus improving the training performance. 

Similar comparison is made on CIFAR-10 dataset in Fig. \ref{algo_CIFAR}, where $\bar{E}=8\mathrm{J}$ and $V=5\times10^{11}$.
Note that compared to the local computation energy required per round ($10\mathrm{J}$), the energy budget is relatively limited, and the advantage of the proposed dynamic scheduling over the myopic policy is more prominent in such a scenario. In particular,  under the highly non-i.i.d. case with $m=1$, dynamic scheduling improves the accuracy by $4.9\%$ compared to the myopic policy, by utilizing $10\%$ more energy in a more balanced manner.
We can also see that our proposed algorithm can satisfy the energy constraints of devices under both datasets (at the end of training, the unified energy usage is smaller than 1).

In the following, we further explore the impact of key parameters on the training performance and energy consumption with CIFAR-10, as it is more challenging than MNIST. We focus on the non-i.i.d. case, where each device has a local subset with $m=2$ labels.

Fig. \ref{V_CIFAR} validates that the weight parameter $V$ can balance the trade-off between the training performance and energy consumption, where $\bar{E}=8\mathrm{J}$. As $V$ increases, devices use energy in a more aggressive manner, leading to a higher energy usage and more scheduled devices, so as to accelerate the convergence. However, if $V$ is too large, such as $V=10^{12}$, energy is not given enough attention and finally the limit is violated. In practical systems, $V$ should be judiciously selected to optimize the training performance while satisfying the energy constraints.

\begin{figure*}[!t]
	\centering	
	\vspace{-3mm}
	\hspace{-8mm}
	\subfigure[Accuracy on test dataset.]{\label{V_CIFAR_acc}			
		\includegraphics[width=0.36\textwidth]{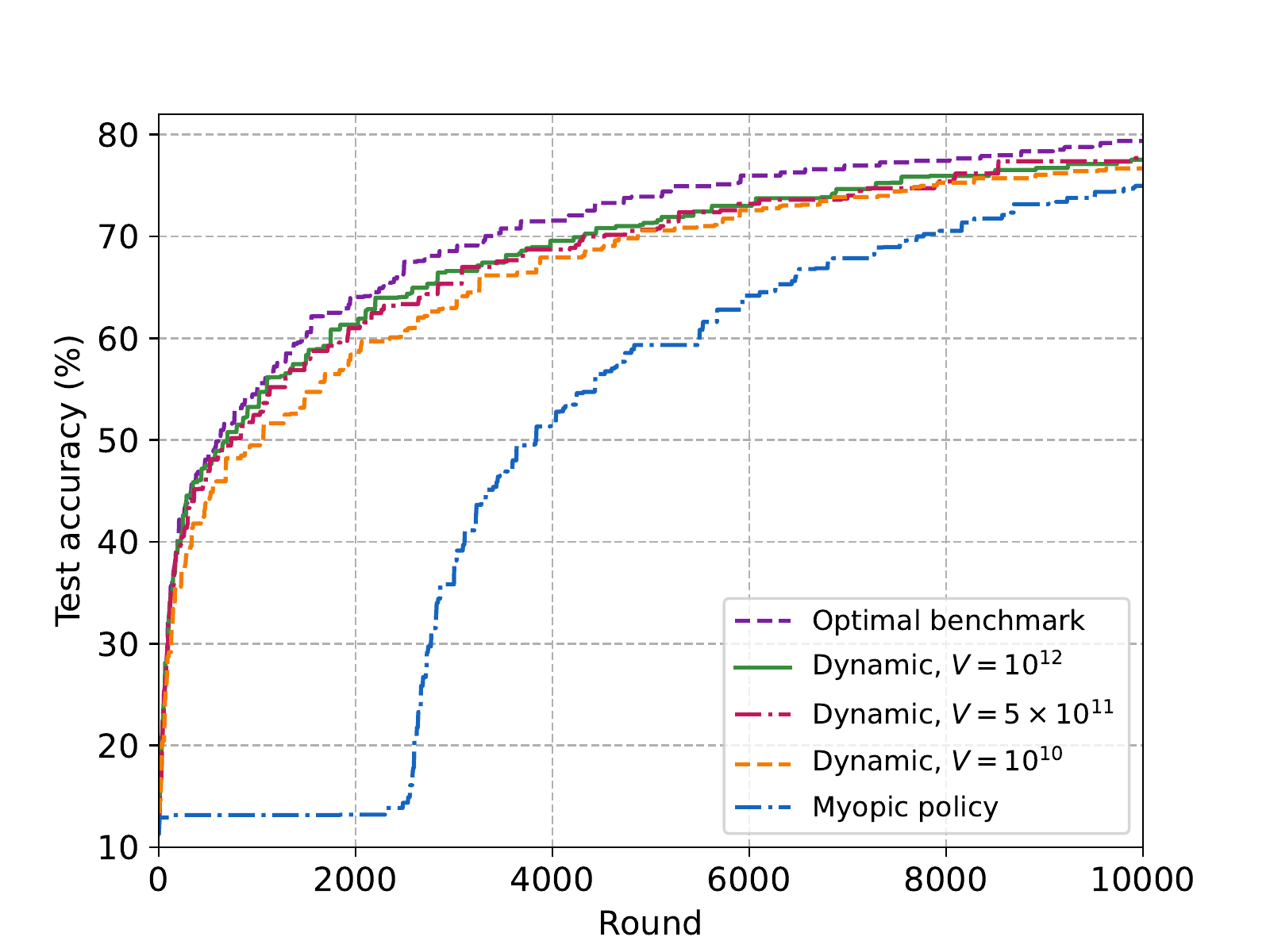}}			
	\hspace{-8mm}
	\subfigure[Unified cumulative energy usage.]{\label{V_CIFAR_energy}	
		\includegraphics[width=0.36\textwidth]{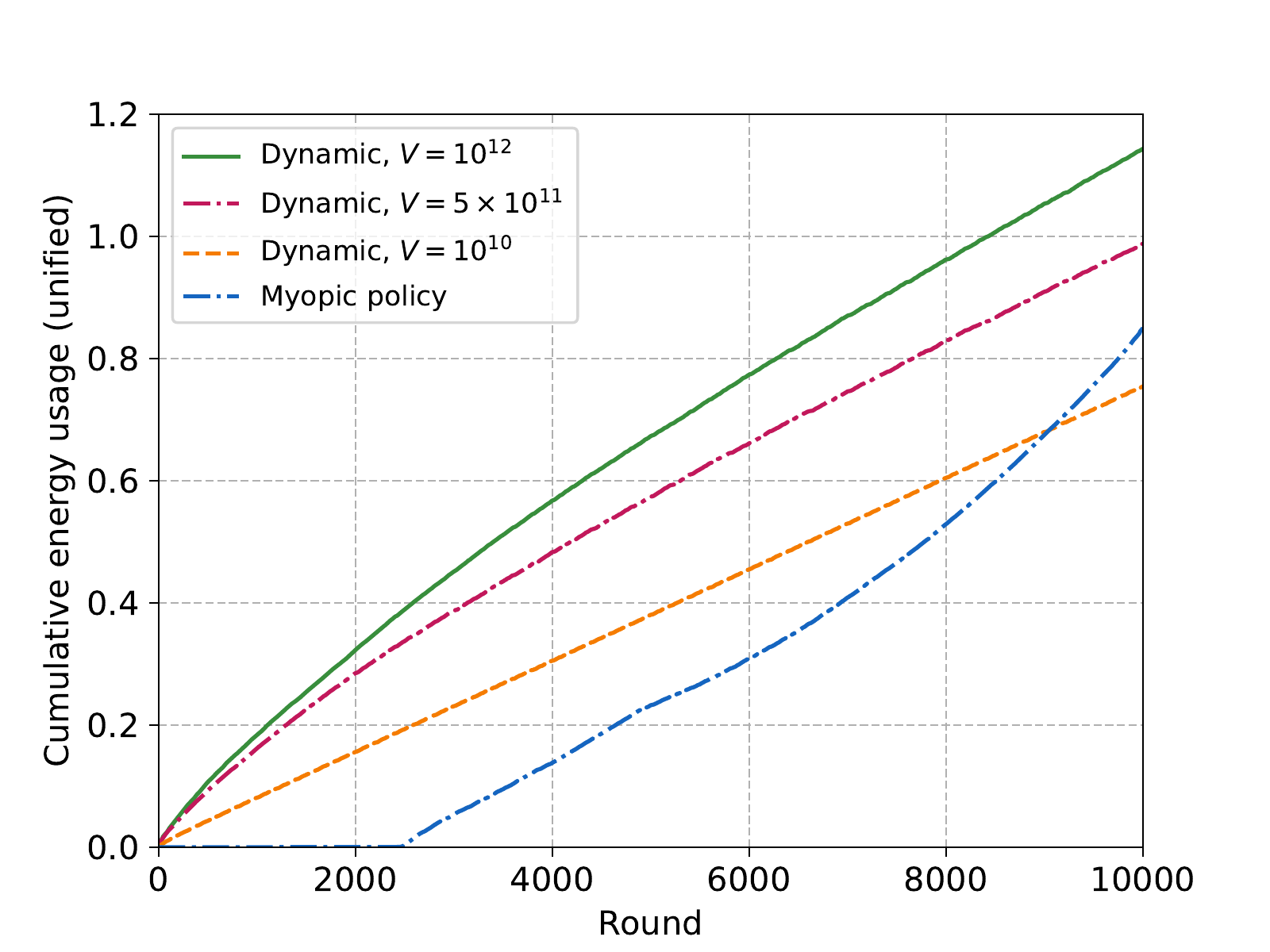}}  
	\hspace{-8mm}
	\subfigure[Scheduled devices.]{\label{V_CIFAR_frac}	
		\includegraphics[width=0.36\textwidth]{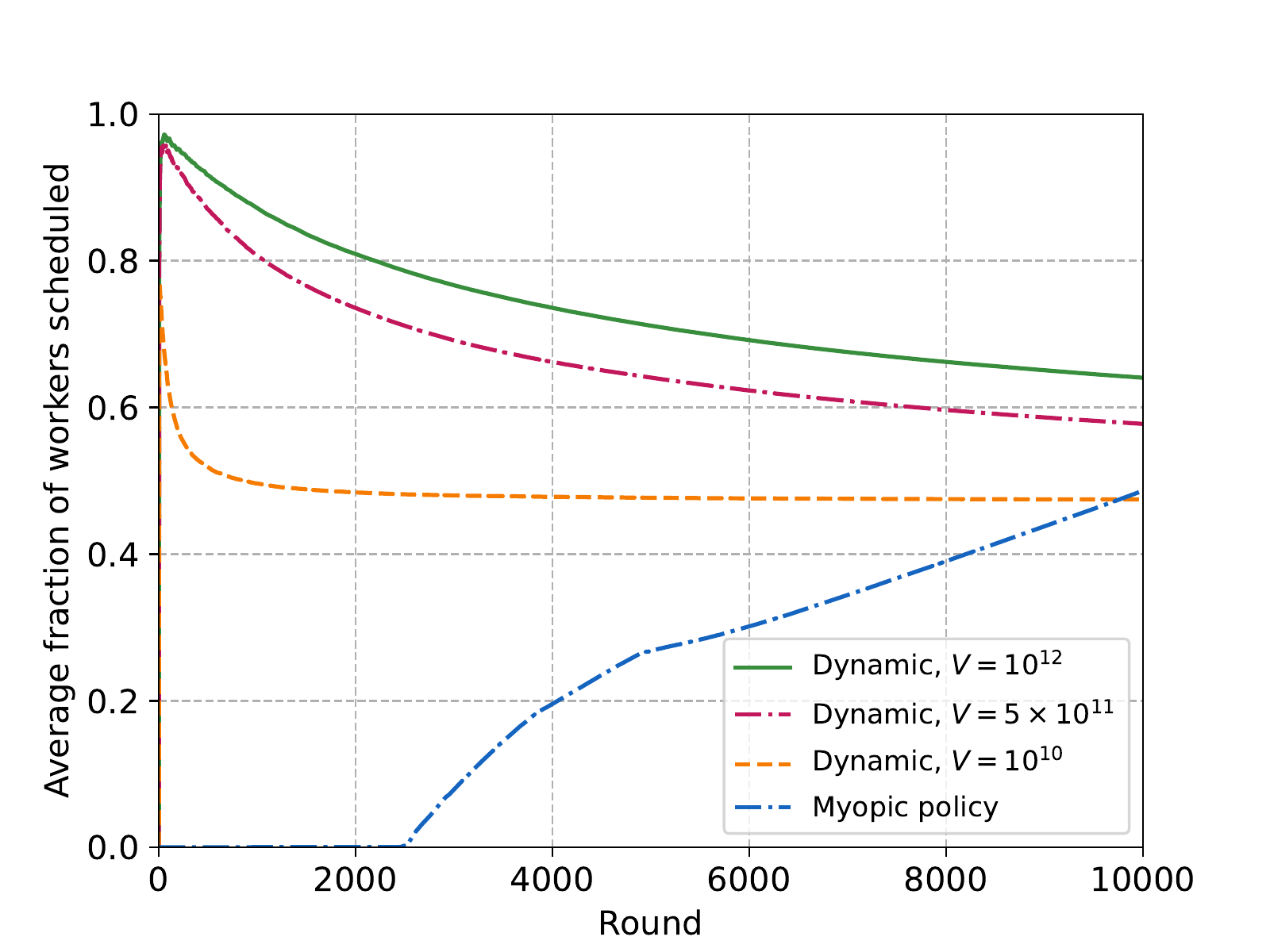}} 
	\hspace{-8mm} 
	\vspace{-3mm}
	\caption{Performance of the proposed algorithm under different weight parameter $V$ on CIFAR-10.}
	\label{V_CIFAR}
\end{figure*}

\begin{figure*}[!t]
	\centering	
	\vspace{-5mm}
	\hspace{-8mm}	
	\subfigure[Accuracy on test dataset.]{\label{SNR_CIFAR_acc}			
		\includegraphics[width=0.36\textwidth]{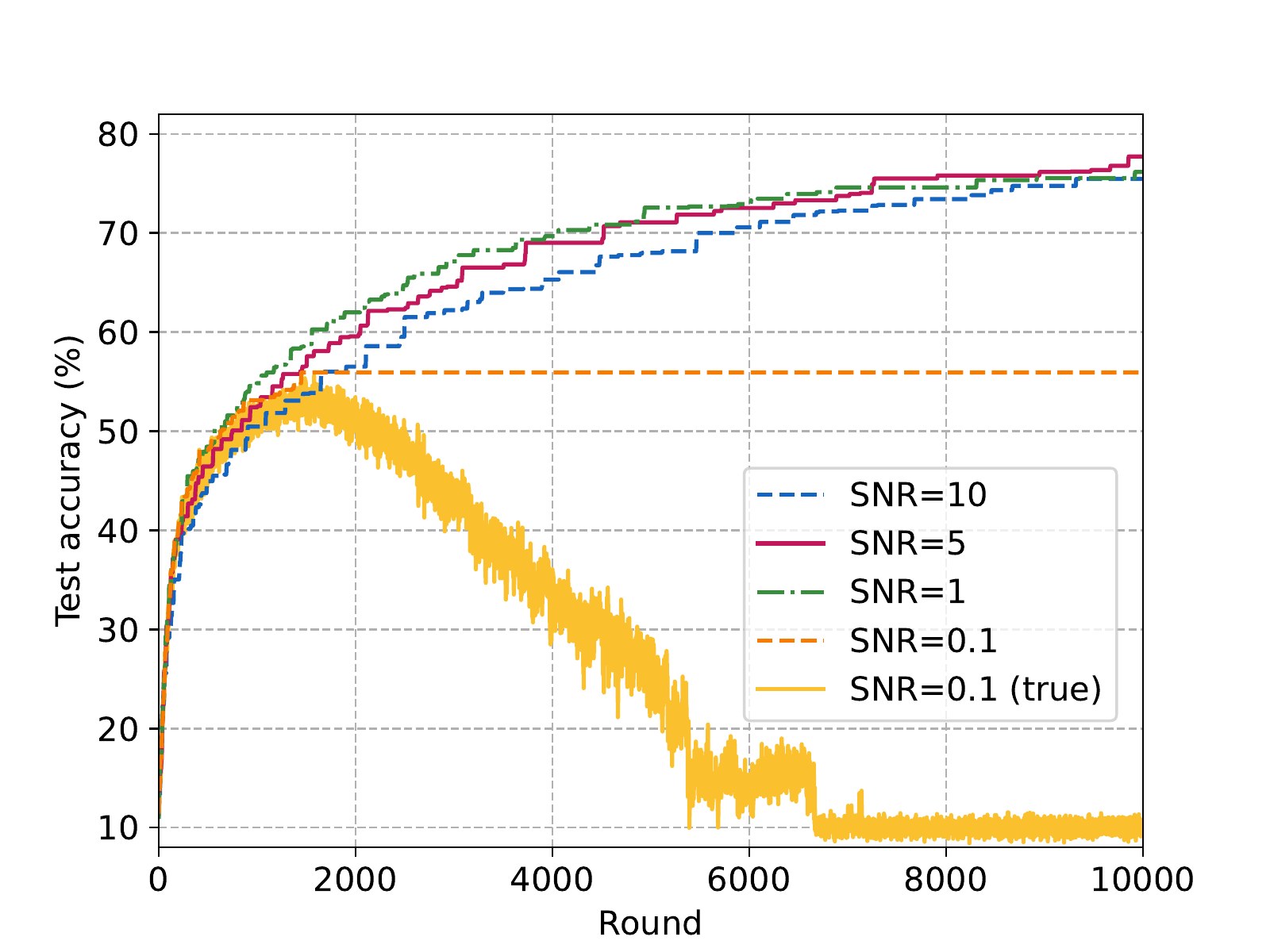}}	
	\hspace{-8mm}	
	\subfigure[Unified cumulative energy usage.]{\label{SNR_CIFAR_energy}	
		\includegraphics[width=0.36\textwidth]{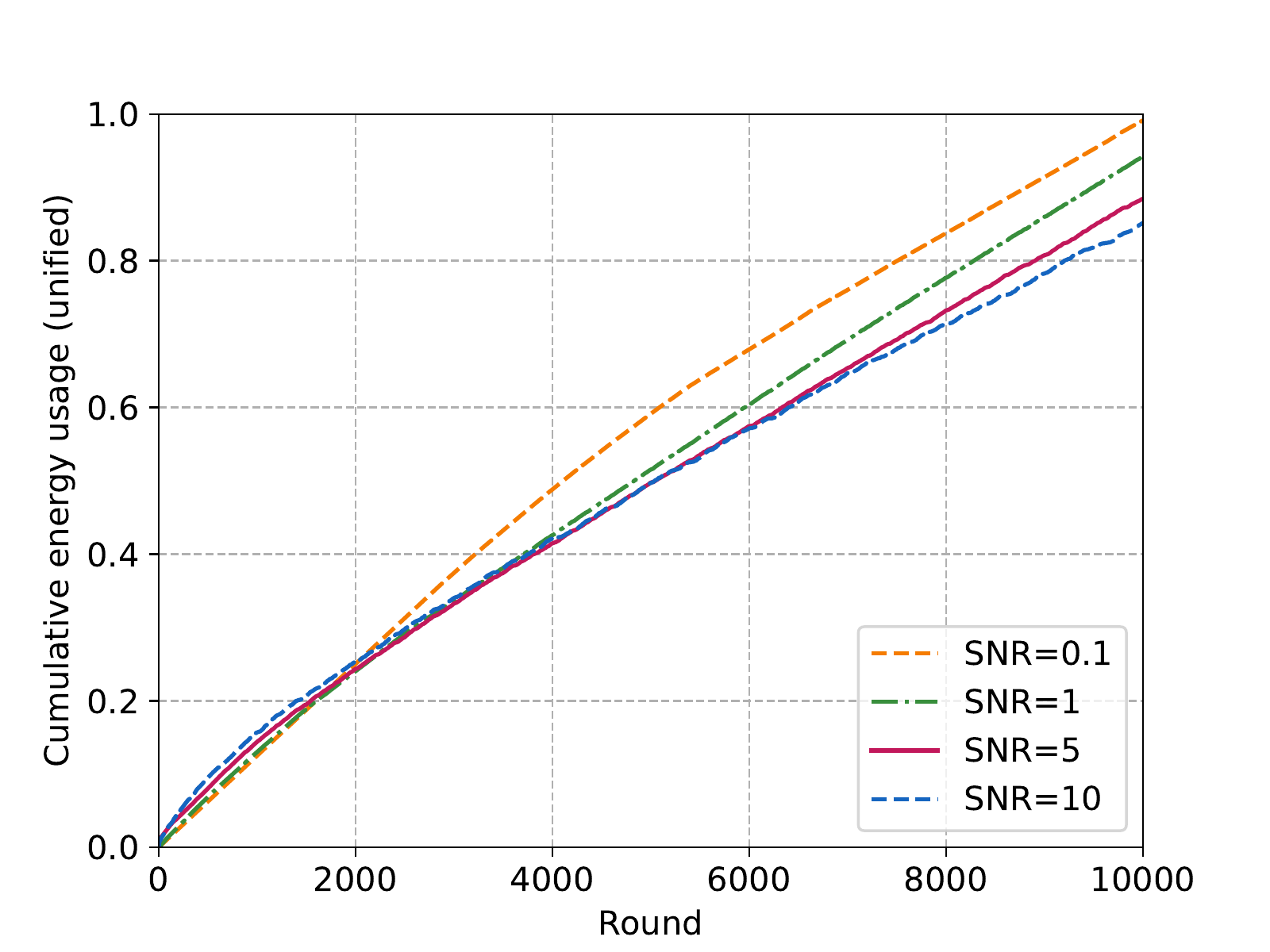}}  
	\hspace{-8mm}
	\subfigure[Scheduled devices.]{\label{SNR_CIFAR_frac}	
		\includegraphics[width=0.36\textwidth]{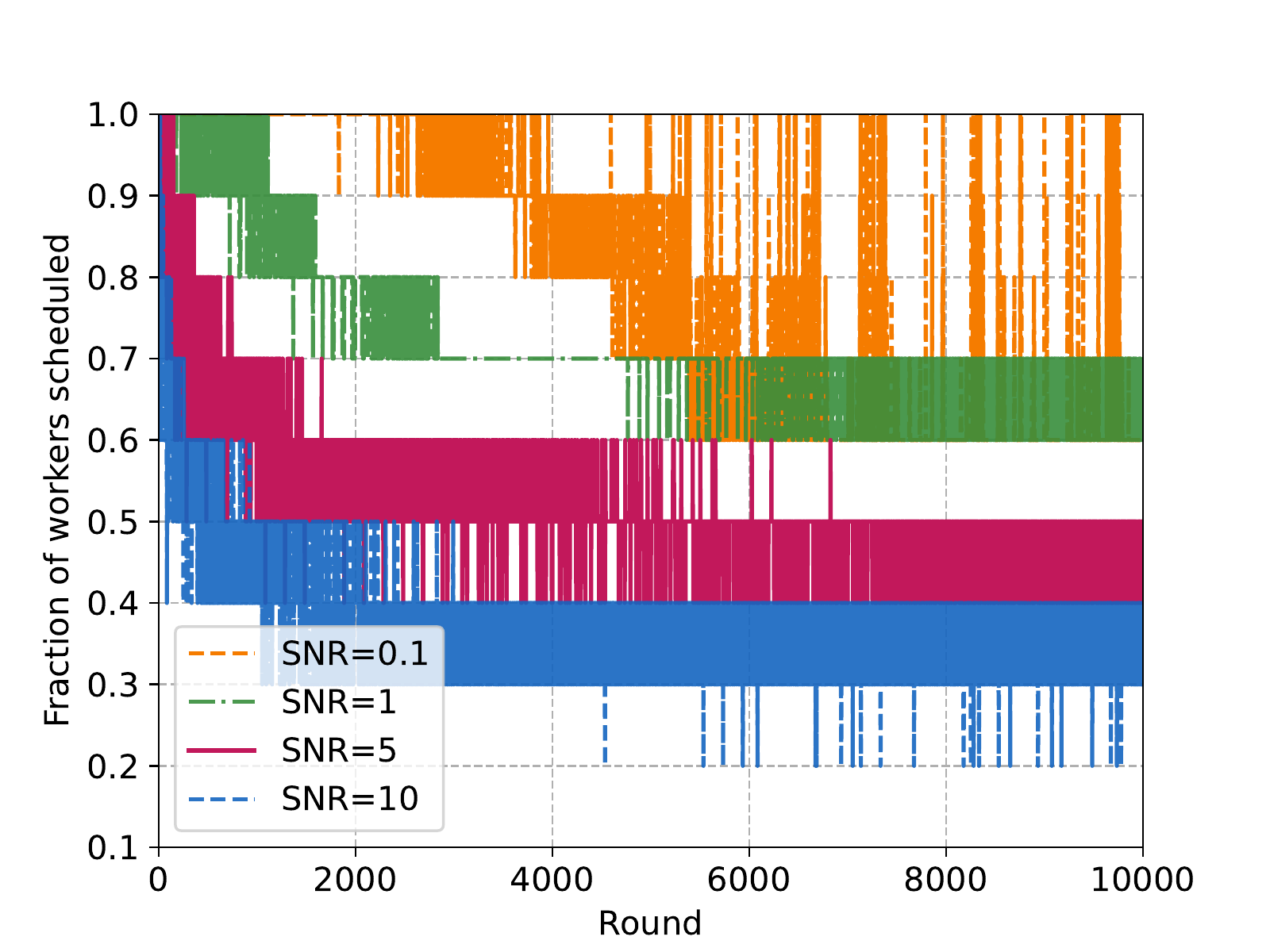}}  
	\hspace{-8mm}
	\vspace{-3mm}
	\caption{Performance of the proposed algorithm under different received SNR thresholds on CIFAR-10. }
	\label{SNR_CIFAR}
\end{figure*}

The impact of the received SNR threshold $\gamma_0$ on the training performance and energy consumption with the proposed dynamic scheduling algorithm is shown in Fig. \ref{SNR_CIFAR}, where $\bar{E}=8\mathrm{J}$, and $V=2.5\times10^{11}$. In Fig. \ref{SNR_CIFAR_acc}, the curve marked with `true' plots the model accuracy on the test dataset in the current round, with SNR threshold $0.1$, while the other curves present the best test accuracy up-to-date. The maximum cumulative energy usage and instantaneous fraction of devices that are scheduled in each round is shown in Fig. \ref{SNR_CIFAR_energy} and Fig. \ref{SNR_CIFAR_frac}, respectively.
Clearly, a smaller SNR threshold helps to save communication energy, and thus more devices can be scheduled in each round. However, the cumulative noise might degrade the accuracy or even diverge the training if the SNR is too low, for instance when $\text{SNR}=0.1$. On the other hand, a larger SNR, such as $\text{SNR}=10$, makes communication more energy-consuming, which also degrades the training performance due to fewer participants. A proper value of the received SNR threshold should be given to balance the negative impact of noise and the energy consumption. As shown in Fig. \ref{SNR_CIFAR}, $\text{SNR}=5$ is the best choice under our simulation setting.

\begin{figure*}[!t]
	\centering	
	\subfigure[Accuracy on test dataset.]{\label{Ebar_CIFAR_acc}			
		\includegraphics[width=0.56\textwidth]{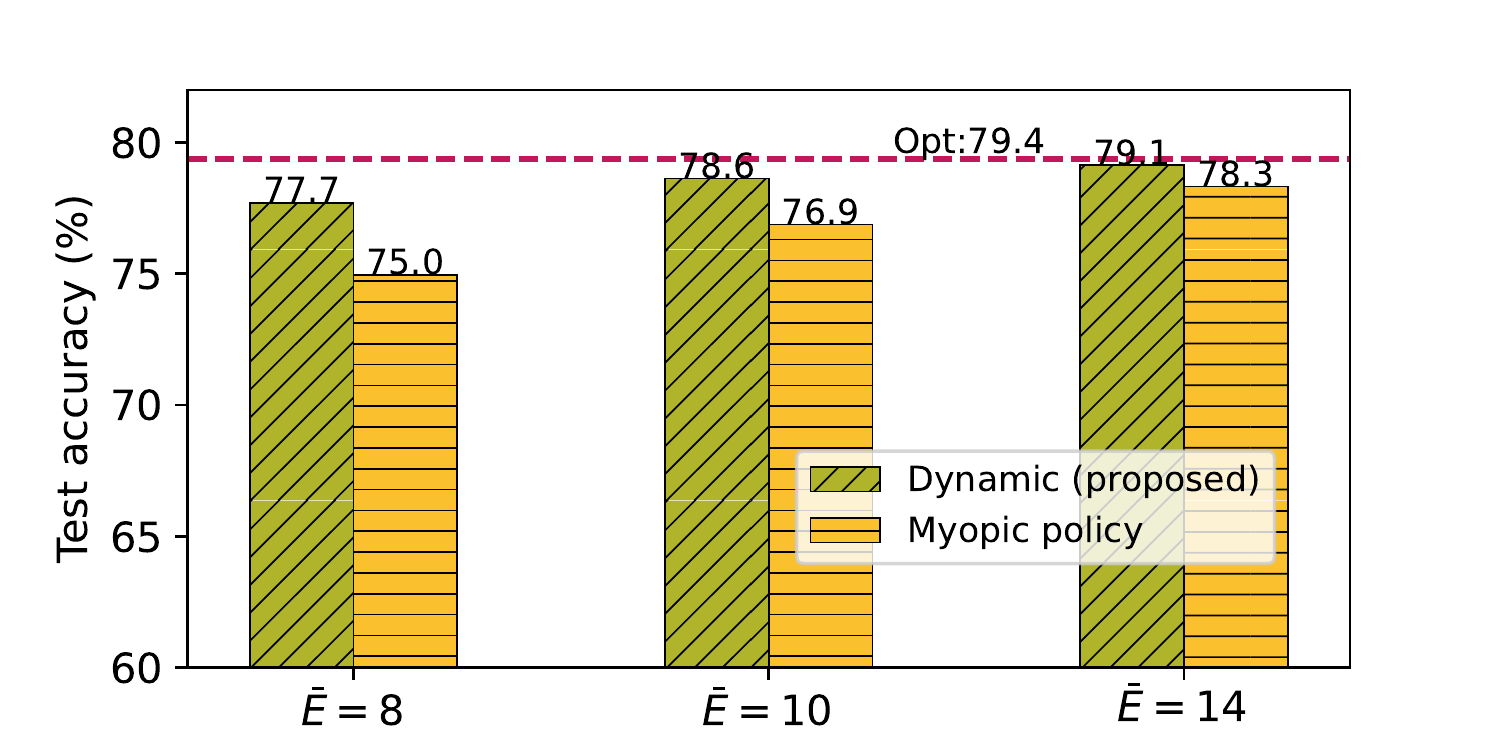}}		
	\subfigure[Unified cumulative energy usage.]{\label{Ebar_CIFAR_energy}	
		\includegraphics[width=0.38\textwidth]{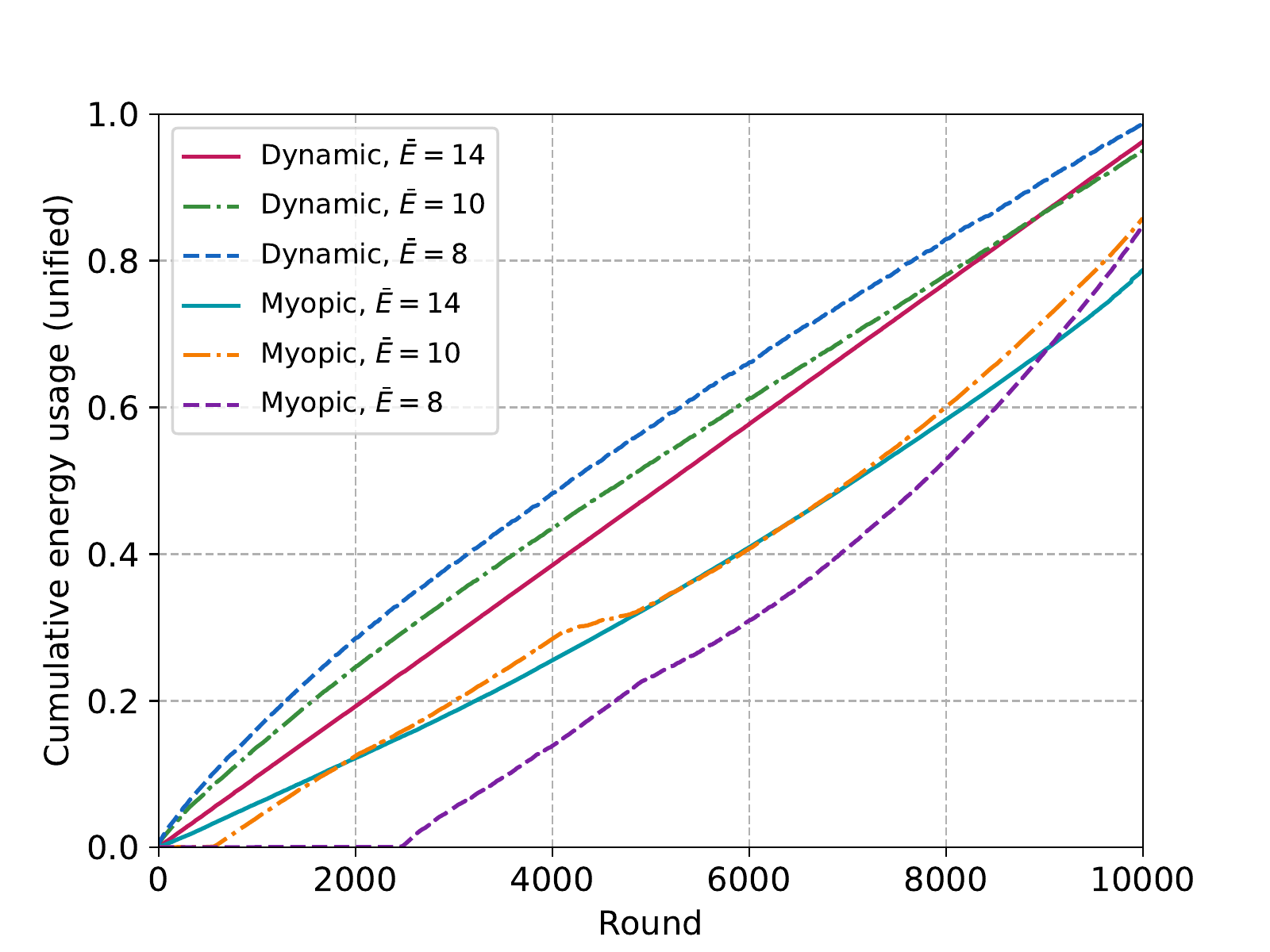}}  
	\vspace{-3mm}
	\caption{Performance of the proposed algorithm and benchmarks under different energy budgets on CIFAR-10. }
	\label{Ebar_CIFAR}
\end{figure*}

\begin{figure*}[!t]
	\centering	
	\vspace{-5mm}			
	\includegraphics[width=0.5\textwidth]{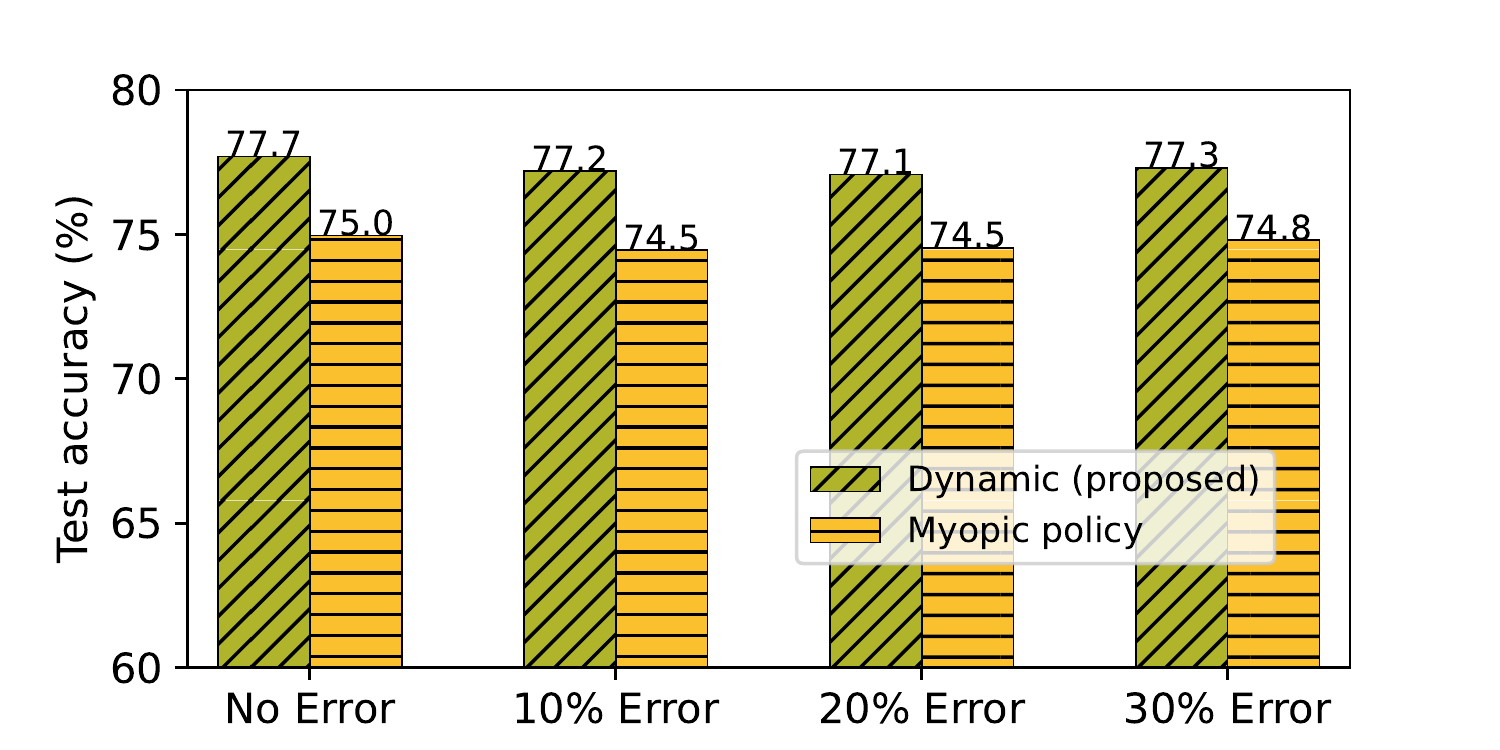}
	\vspace{-5mm}	
	\caption{Performance of the proposed algorithm and myopic benchmark under different channel estimation errors on CIFAR-10. }
	\label{Herror_CIFAR}
\end{figure*}

We compare our proposed algorithm with optimal and myopic benchmarks under different energy budgets in Fig. \ref{Ebar_CIFAR}. For $\bar{E}=14\mathrm{J}$, we set $V=10^{10}$, and for $\bar{E}=8\mathrm{J}$ or $10\mathrm{J}$, we let $V=5\times10^{11}$.
Our proposed dynamic scheduling algorithm always outperforms the myopic benchmark by achieving higher accuracy and utilizing energy more efficiently, and approaches the optimal accuracy as $\bar{E}$ increases. Moreover, the accuracy gap between the proposed algorithm and myopic policy is $2.7\%$, $1.7\%$ and $0.8\%$ for $\bar{E}=8$, $10$ and $14$, respectively, indicating that the dynamic scheduling algorithm is particularly promising under the energy-limited regime.

Finally, we evaluate the robustness of the proposed dynamic scheduling algorithm by introducing channel observation errors, where $\bar{E}=8\mathrm{J}$ and $V=5\times10^{11}$. In Fig. \ref{Herror_CIFAR}, the first group of bars are obtained without channel observation error, i.e., $\tilde{h}_{n,t}=h_{n,t}$. The second to the forth group of bars suffer inaccurate channel observations. For example, if the error is $20\%$, then $\tilde{h}_{n,t}$ is uniformly distributed within $[0.8h_{n,t}, 1.2h_{n,t}]$. A larger observation error further leads to less accurate energy estimations $\tilde{E}_{n,t}$. However, the training performance of the proposed dynamic scheduling algorithm only suffers a tiny degradation, which validates its robustness in practical scenarios. We also mention that the myopic policy also performs well under different observation errors compared to the error-free case. Nevertheless, the proposed algorithm still beats the myopic policy by a significant margin in all the scenarios.

\section{Conclusions} \label{con}

We have investigated the device scheduling problem for FEEL with over-the-air gradient aggregation, aiming to optimize the training performance under joint communication and computation energy limits of devices. Convergence analysis has been carried out showing the importance of device participation to the training performance, and an energy-aware dynamic device scheduling algorithm has been developed. 
In particular, we have noticed the existence of unobservable states, mainly the $l_2$-norm of local gradients, for online decision making in over-the-air FEEL, and proposed an estimated-drift-plus-penalty solution based on the Lyapunov optimization framework accordingly.
We have characterized a theoretical guarantee for the proposed dynamic scheduling algorithm by taking the deviation of estimated states into consideration.
Experiments on MNIST and CIFAR-10 datasets have been carried out to validate the theoretical findings. Compared to the myopic benchmark, we have shown a significant $4.9\%$ accuracy improvement on CIFAR-10 for a highly non-i.i.d. data distribution and stringent energy constraints.

As future directions, heterogeneous data distributions across devices can be considered, where local datasets represent different number of classes. We would like to observe if data diversity of a device should be taken into account for the scheduling decision.
The trade-off between training delay and energy consumption in over-the-air FEEL is also worth further investigation.

\appendices{}

\section{Proof of Lemma 1} \label{a1}
For the simplicity of notation, let $\tilde{\g}_t\triangleq\frac{\sum_{n\in\mathcal{B}_t}\tilde{\g}_{n,t}}{|\mathcal{B}_t|}$, $\tilde{\z}_t\triangleq\frac{\z_t}{\sigma_t |\mathcal{B}_t|}$. 
%
Thus the global model is updated according to
\begin{align}
\w_{t}=\w_{t-1}-\eta_t(\tilde{\g}_t+\tilde{\z}_t).
\end{align}

According to Assumption \ref{ass_smooth}, the gap of loss between two adjacent rounds can be bounded by
\begin{align}
F(\w_{t})-F(\w_{t-1})&\leq \nabla F(\w_{t-1})^{\text{T}}(\w_{t}-\w_{t-1})+\frac{l}{2}\lVert \w_{t}-\w_{t-1}\rVert_2^2 \\
&=  -\eta_t \g_t^{\text{T}}  (\tilde{\g}_t+\tilde{\z}_t) + \frac{l\eta_t^2}{2} \lVert \tilde{\g}_t+\tilde{\z}_t \rVert_2^2.
\end{align}

Recall that each entry in the noise vector $\z_t$ follows Gaussian distribution with zero mean and variance $\sigma_0^2$. Taking the expectation over noise, and considering the fact that the channel noise and local gradient are independent, we can obtain
\begin{align}
\mathbb{E}_{\z_t}\left[\lVert \tilde{\g}_t+\tilde{\z}_t \rVert_2^2\right]=\lVert \tilde{\g}_t\rVert_2^2+ \mathbb{E}_{\z_t}\left[\lVert \tilde{\z}_t\rVert_2^2\right]
=\lVert \tilde{\g}_t\rVert_2^2+\frac{\sigma_0^2s}{\sigma_t^2 |\mathcal{B}_t|^2},
\end{align}
\begin{align}  \label{a1_3}
\mathbb{E}_{\z_t}\left[F(\w_{t})-F(\w_{t-1})\right] \leq -\eta_t \g_t^{\text{T}} \tilde{\g}_t+ \frac{l\eta_t^2}{2} \lVert \tilde{\g}_t \rVert_2^2 +  \frac{l\eta_t^2}{2}\frac{\sigma_0^2s}{\sigma_t^2 |\mathcal{B}_t|^2}.
\end{align}

Taking the expectation over stochastic data sampling, and based on Assumption \ref{ass_unbias}, we get
\begin{align}  \label{a1_4}
\mathbb{E}_{\mathcal{L}_{n,t}} \left[\tilde{\g}_t\right]= 
\mathbb{E}_{\mathcal{L}_{n,t}} \left[\frac{\sum_{n\in\mathcal{B}_t}\tilde{\g}_{n,t}}{|\mathcal{B}_t|}\right]=\g_t,
\end{align}
\begin{align} \label{a1_2}
\mathbb{E}_{\mathcal{L}_{n,t}}\left[\lVert \tilde{\g}_t \rVert_2^2\right]=\mathbb{E}\left[\left\lVert \frac{\sum_{n\in\mathcal{B}_t}\sum_{\x \in\mathcal{L}_{n,t}} \nabla f\left(\w_{t-1},\x\right)}{L_b |\mathcal{B}_t|}\right\rVert_2^2\right]\leq \lVert \g_t\rVert_2^2+\frac{G^2}{L_b |\mathcal{B}_t|}.
\end{align}

Finally, taking the expectation over noise and SGD on the left hand side of \eqref{a1_3}, and substituting its right hand side with \eqref{a1_4} and \eqref{a1_2}, we have 
\begin{align}
\mathbb{E}[F(\w_{t})-F(\w_{t-1})]&\leq -\eta_t \lVert \g_t \rVert_2^2 + \frac{l\eta_t^2}{2} \left( \lVert \g_t \rVert_2^2 +\frac{G^2}{L_b |\mathcal{B}_t|}\right) +  \frac{l\eta_t^2}{2}\frac{\sigma_0^2s}{\sigma_t^2 |\mathcal{B}_t|^2} \nonumber\\
&= -\eta_t \left(1-\frac{l\eta_t}{2}\right) \lVert \g_t\rVert_2^2 +\frac{l\eta_t^2}{2}\left(\frac{G^2}{L_b |\mathcal{B}_t|}+\frac{\sigma_0^2 s}{\sigma_t^2|\mathcal{B}_t|^2}\right).
\end{align}

Since $\mathbb{E}[F(\w_{t})-F(\w_{t-1})]=\mathbb{E}[F(\w_{t})]-\mathbb{E}[F(\w_{t-1})]$, Lemma 1 is proved.


\section{Proof of Theorem 1} \label{a2}
By the $\mu$-strong convexity of the loss functions (Assumption \ref{ass_convex}), the Polyak-Lojasiewicz inequality holds
\begin{align} \label{g_t_bound}
\lVert \g_t\rVert_2^2 \geq 2\mu (F(\w_{t-1})-F^*).
\end{align}

Substituting \eqref{g_t_bound} into Lemma \ref{single_round_fix}, and assuming that $\eta_t\leq \frac{1}{l}$ (thus $1-\frac{l\eta_t}{2}\geq\frac{1}{2}$), we can obtain
\begin{align} 
\mathbb{E}[F(\w_{t})-F(\w_{t-1})]
&\leq -\eta_t \left(1-\frac{l\eta_t}{2}\right) \lVert \g_t\rVert_2^2 +
\frac{l\eta_t^2}{2}\left(\frac{G^2}{L_b |\mathcal{B}_t|}+\frac{\sigma_0^2 s}{\sigma_t^2|\mathcal{B}_t|^2}\right) \nonumber\\
& \leq  -\eta_t \mu (\mathbb{E}[F(\w_{t-1})]-F^*) +\frac{\eta_t}{2}\left(\frac{G^2}{L_b |\mathcal{B}_t|}+\frac{\sigma_0^2 s}{\sigma_t^2|\mathcal{B}_t|^2}\right). \label{a2_1}
\end{align} 

Let $A_t\triangleq\frac{\eta_t}{2}\left(\frac{G^2}{L_b |\mathcal{B}_t|}+\frac{\sigma_0^2 s}{\sigma_t^2|\mathcal{B}_t|^2}\right)$, and thus \eqref{a2_1} can be re-written as
\begin{align}
\mathbb{E}[F(\w_{t})]-F^*
& \leq  (1-\mu \eta_t ) (\mathbb{E}[F(\w_{t-1})]-F^*) +A_t .
\end{align}

With recursion, we can prove Theorem 1:
\begin{align}
\mathbb{E}[F(\w_{t})]-F^*
& \leq  (1-\mu \eta_t ) (\mathbb{E}[F(\w_{t-1})]-F^*) +A_t \nonumber \\
& \leq  (1-\mu \eta_t ) (1-\mu \eta_{t-1} )(\mathbb{E}[F(\w_{t-2})]-F^*) +(1-\mu \eta_t ) A_{t-1} +A_t \nonumber \\
& \leq \cdots \leq (\mathbb{E}[F(\w_{0})]-F^*)\prod_{i=1}^{t} (1-\mu \eta_i )  +\sum_{i=1}^{t-1} A_i  \prod_{j=i+1}^{t}(1-\mu \eta_i ) +A_t.
\end{align}

\section{Proof of Theorem 2} \label{a3}
Let $y_{n,t}\triangleq \beta_{n,t}E_{n,t}-\frac{\bar{E}_n}{T}$, and $\tilde{y}_{n,t}\triangleq\beta_{n,t}\tilde{E}_{n,t}-\frac{\bar{E}_n}{T}$. Define the error of estimated energy consumption at device $n$ in the $t$-th round as $\delta_{n,t}\triangleq\beta_{n,t}\tilde{E}_{n,t}-\beta_{n,t}E_{n,t}=\tilde{y}_{n,t}-y_{n,t}$, with maximum absolute value $\delta_0\triangleq\max_{\{n,t\}}\left\{\left|\tilde{E}_{n,t}-E_{n,t}\right|\right\}$.
According to the evolution of the virtual queue, which is defined in \eqref{queue}, it is easy to prove that $q_{n,t+1}^2 \leq \left(q_{n,t}+y_{n,t}\right)^2$ and $y_{n,t}\leq q_{n,t+1}-q_{n,t}$.

Define the Lyapunov function as $L(t)\triangleq\sum_{n=1}^{N}\frac{1}{2} q_{n,t}^2$, and the Lyapunov drift of a single round as $\Delta_1(t)\triangleq L(t+1)-L(t)$, which is given by
\begin{align} \label{drift}
\Delta_1(t)&= L(t+1)-L(t)=\sum_{n=1}^{N}\left(\frac{1}{2}q_{n,t+1}^2- \frac{1}{2}q_{n,t}^2\right) \nonumber\\
&\leq \sum_{n=1}^{N}\left(\frac{1}{2}y_{n,t}^2 + q_{n,t}y_{n,t}\right) \leq  \theta_0+  \sum_{n=1}^{N}q_{n,t}y_{n,t},
\end{align}
where $\theta_0\triangleq\sum_{n=1}^{N}\frac{1}{2}\theta_n^2$ and $\theta_n\triangleq\max_t \left\{|y_{n,t}|\right\}$. By adding $VU_t$ on both sides of \eqref{drift}, an upper bound on the single-round drift-plus-penalty function is given by
\begin{align} 
\Delta_1(t)+VU_t&\leq  \theta_0+  \sum_{n=1}^{N}q_{n,t}y_{n,t}+VU_t \nonumber\\
&=\theta_0+  \sum_{n=1}^{N}q_{n,t}\left(\beta_{n,t}E_{n,t}-\frac{\bar{E}_n}{T}\right)+VU_t \label{drift_plus_penalty_ori} \\
&=\theta_0+ \sum_{n=1}^{N} q_{n,t}\left(\tilde{y}_{n,t}-\delta_{n,t}\right)+VU_t \nonumber\\
&=\theta_0+ \sum_{n=1}^{N} q_{n,t}\left(\beta_{n,t}\tilde{E}_{n,t}-\delta_{n,t}-\frac{\bar{E}_n}{T}\right)+VU_t. \label{drift_plus_penalty1}
\end{align}

The classical drift-plus-penalty algorithm of Lyapunov optimization aims to minimize the upper bound of $\Delta_1(t)+VU_t$, as shown in \eqref{drift_plus_penalty_ori}. Since we do not have the exact value of $E_{n,t}$, we instead minimize the estimated-drift-plus-penalty, as shown in \eqref{drift_plus_penalty1}.

Define the $T$-round drift as $\Delta_T\triangleq L(T+1)-L(1)= \sum_{n=1}^{N}\frac{1}{2}q_{n,T+1}^2$. Then the $T$-round drift-plus-penalty function can be bounded by:
\begin{align}
\Delta_T+V\sum_{t=1}^{T}  U_t &\leq \sum_{t=1}^{T}\left(\theta_0+\sum_{n=1}^{N} q_{n,t}(\tilde{y}_{n,t}-\delta_{n,t})\right)+V\sum_{t=1}^{T}  U_t  \nonumber\\
&=\theta_0T+ \sum_{t=1}^{T} \left(\sum_{n=1}^{N} q_{n,t}\tilde{y}_{n,t}+V U_t -\sum_{n=1}^{N} q_{n,t}\delta_{n,t}\right)
\end{align}

We use superscript $^*$ to represent the optimal offline solution of $\mathcal{P}4$ ($\sigma_t$ is not an optimization variable), superscript $^\dagger$ to represent the classical drift-plus-penalty algorithm, i.e., $\min_{\left\{\beta_{n,t}\right\}} VU_t + \sum_{n=1}^{N}\beta_{n,t}q_{n,t}E_{n,t}$, and  $^\ddagger$ to represent our proposed estimated-drift-plus-penalty algorithm that solves $\mathcal{P}6$.

The $T$-round drift-plus-penalty is bounded by:
\begin{align} 
\Delta_T^\ddagger+V\sum_{t=1}^{T}  U_t^\ddagger&\leq \theta_0T+ \sum_{t=1}^{T} \left(\sum_{n=1}^{N} q_{n,t}\tilde{y}_{n,t}^\ddagger+V U_t^\ddagger -\sum_{n=1}^{N} q_{n,t}\delta_{n,t}^\ddagger\right)  \nonumber \\
&\overset{(a)}{\leq} \theta_0T+ \sum_{t=1}^{T} \left(\sum_{n=1}^{N} q_{n,t}\tilde{y}_n^\dagger(t)+V U_t^\dagger -\sum_{n=1}^{N} q_{n,t}\delta_{n,t}^\ddagger\right) \nonumber\\
&=\theta_0T+ \sum_{t=1}^{T} \left(\sum_{n=1}^{N} q_{n,t}\left(y_{n,t}^\dagger+\delta_{n,t}^\dagger\right)+V U_t^\dagger -\sum_{n=1}^{N} q_{n,t}\delta_{n,t}^\ddagger\right) \nonumber \\
&=\theta_0T+ \sum_{t=1}^{T} \left(\sum_{n=1}^{N} q_{n,t}y_{n,t}^\dagger+V U_t^\dagger +\sum_{n=1}^{N} q_{n,t}\left(\delta_{n,t}^\dagger-\delta_{n,t}^\ddagger\right)\right) \nonumber \\
&\overset{(b)}{\leq} \theta_0T+ \sum_{t=1}^{T} \left(\sum_{n=1}^{N} q_{n,t}y_{n,t}^*+V U_t^*+2\delta_0\sum_{n=1}^{N} q_{n,t}\right).  \label{bound_T}
\end{align}

Inequality (a) holds because optimally solving $\mathcal{P}6$ yields a minimum value $\sum_{n=1}^{N} q_{n,t}\tilde{y}_{n,t}^\ddagger+V U_t^\ddagger$ for each $t$.
Inequality (b) holds since the drift-plus-penalty algorithm achieves the minimum value of $\sum_{n=1}^{N} q_{n,t}y_{n,t}+V U_t$, and thus plugging in the optimal offline policy on the right-hand-side increases the value.

Now we bound the right-hand-side of \eqref{bound_T}. Note that $q_{n,t+1}-q_{n,t}\leq \theta_n, \forall t,n$, and thus
\begin{align}
& q_{n,t}=q_{n,t}-q_{n,1}=\sum_{\tau=1}^{t-1}(q_{n,\tau+1}-q_{n,\tau})\leq (t-1)\theta_n,\label{q_b1}\\
& q_{n,t}y_{n,t}^*=(q_{n,t}-q_{n,1})y_{n,t}^*\leq (t-1)\theta_n^2. \label{q_b2}
\end{align}
Substituting \eqref{q_b1} and \eqref{q_b2} into \eqref{bound_T} yields
\begin{align}
\Delta_T^\ddagger+V\sum_{t=1}^{T}  U_t^\ddagger
&\leq  \theta_0T+ V \sum_{t=1}^{T} U_t^*+ \sum_{t=1}^{T} \sum_{n=1}^{N}(t-1)\theta_n^2 + 
2\delta_0\sum_{t=1}^{T}\sum_{n=1}^{N} (t-1) \theta_n \nonumber\\
&= \theta_0T+ V\sum_{t=1}^{T} U_t^*+\theta_0T(T-1)+T(T-1)\delta_0\sum_{n=1}^{N} \theta_n \nonumber\\
&=V\sum_{t=1}^{T}  U_t^*+\theta_0T^2+T(T-1)\delta_0\sum_{n=1}^{N} \theta_n. \label{bound_sum}
\end{align}
Notice that $\Delta_T^\ddagger\geq 0$, \eqref{bound_Ut} in Theorem \ref{theorem_algo} can be derived from \eqref{bound_sum} by dividing both sides by $V$. As $U_t>0$, and for $\forall n$, $\frac{1}{2}q_{n,T+1}^2 \leq \Delta_T$, we get
\begin{align}
	\sum_{t=1}^{T}y_{n,t}&=\sum_{t=1}^{T}\beta_{n,t}E_{n,t}-\bar{E}_n\leq \sum_{t=1}^{T} q_{n,t+1}-q_{n,t} =q_{n,T+1} \nonumber\\
	&\leq \sqrt{2\Delta_T} = \sqrt{2V\sum_{t=1}^{T}  U_t^*+2\theta_0T^2+2T(T-1)\delta_0\sum_{n=1}^{N} \theta_n}.
\end{align}
Thus eq. \eqref{bound_energy} in Theorem \ref{theorem_algo} is proved.




\begin{thebibliography}{100}
\bibitem{Sun2020ICC}
Y. Sun, S. Zhou, and D. Gunduz, ``Energy-aware analog aggregation for federated learning with redundant data," in \emph{Proc. IEEE Int. Conf. Commun. (ICC)}, Dublin, Ireland, Jun. 2020. 

\bibitem{Jiang2017ML}
C. Jiang, H. Zhang, Y. Ren, Z. Han, K. -C. Chen, and L. Hanzo, ``Machine learning paradigms for next-generation wireless networks," \emph{IEEE Wireless Commun.}, vol. 24, no. 2, pp. 98-105, Apr. 2017.

\bibitem{MLintheair}
D. Gunduz, P. de Kerret, N. D. Sidiropoulos, D. Gesbert, C. R. Murthy and M. van der Schaar, ``Machine learning in the air," \emph{IEEE J. Sel. Areas Commun.}, vol. 37, no. 10, pp. 2184-2199, Oct. 2019.




\bibitem{Park2019wireless}
J. Park, S. Samarakoon, M. Bennis, and M. Debbah, ``Wireless network intelligence at the edge," in \emph{Proceedings of the IEEE}, vol. 107, no. 11, pp. 2204-2239, Nov. 2019.


\bibitem{googleai}
J. Konecny, H. B. McMahan, F. X. Yu, P. Richtárik, A. T. Suresh, and D. Bacon, ``Federated learning: Strategies for improving communication efficiency,'' \emph{NIPS Workshop on Private Multi-Party Machine Learning}, Oct. 2016.

\bibitem{McMahan2017commun}
B. McMahan, E. Moore, D. Ramage, et al. ``Communication-efficient learning of deep networks from decentralized data,'' in \emph{Proc. Artificial Intelligence and Statistics (AIStats)}, Apr. 2017.

\bibitem{flatscale}
K. Bonawitz, et al. ``Towards federated learning at scale:
System design,'' in \emph{Proc. Conf. Systems and Machine Learning}, Stanford, CA, USA, Apr. 2019.


\bibitem{li2019federated}	
T. Li, A. K. Sahu, A. Talwalkar, and V. Smith, ``Federated learning: Challenges, methods, and future directions," in \emph{IEEE Signal Process. Mag.}, vol. 37, no. 3, pp. 50–60, 2020.

\bibitem{zhao2018federated}
Y. Zhao, M. Li, L. Lai, N. Suda, D. Civin, and V. Chandra, ``Federated learning with non-iid data,'' \emph{arXiv preprint arXiv:1806.00582}, Jun. 2018.


\bibitem{Lim2020FLsurvey}
W. Y. B. Lim, et al., ``Federated learning in mobile edge networks: A comprehensive survey," in \emph{IEEE Commun. Surveys Tuts.}, vol. 22, no. 3, pp. 2031-2063, thirdquarter 2020.

\bibitem{Chen2021distributed}
M. Chen, D. Gunduz, K. Huang, W. Saad, M. Bennis, A. V. Feljan, and H. V. Poor, ``Distributed learning in wireless networks: Recent progress and future challenges,'' \emph{arXiv preprint arXiv:2104.02151}, Apr. 2021.

\bibitem{QSGD}
D. Alistarh, D. Grubic, J. Li, R. Tomioka, and M. Vojnovic, “QSGD: Communication-efficient SGD via
gradient quantization and encoding,” in \emph{Proc. Advances in Neural Information Processing Systems (NIPS)}, Dec. 2017.


\bibitem{gradientspars}
J. Wangni, J. Wang, J. Liu, and T. Zhang, “Gradient sparsification for communication-efficient
distributed optimization,” in \emph{Advances in Neural Information Processing Systems (NIPS)}, Dec. 2018.

\bibitem{du2020high}
Y. Du, S. Yang, and K. Huang, ``High-dimensional stochastic gradient quantization for communication-efficient edge learning," \emph{IEEE Trans. Signal Process.}, vol. 68, pp. 2128-2142, Mar. 2020.

\bibitem{FedPAQ}
A. Reisizadeh, A. Mokhtari, H. Hassani, A. Jadbabaie, R. Pedarsani, “FedPAQ: A communicationefficient
federated learning method with periodic averaging and quantization,” in \emph{International Conference
	on Artificial Intelligence and Statistics (AIStats)}, Jun. 2020.

\bibitem{yang2019scheduling}
H. H. Yang, Z. Liu, T. Q. S. Quek, and H. V. Poor, ``Scheduling policies for federated learning in wireless networks,'' \emph{IEEE Trans. Commun.}, vol. 68, no. 1, pp. 317-333, Jan. 2020.

\bibitem{Amiri2021convergence}
M. Mohammadi Amiri, D. Gunduz, S. R. Kulkarni, and H. V. Poor, ``Convergence of update aware device scheduling for federated learning at the wireless edge,''  in \emph{IEEE Trans. Wireless Commun.},  early access, Jan. 2021.

\bibitem{wang2019adaptive}
S. Wang, T. Tuor, T. Salonidis, K. K. Leung, C. Makaya, T. He and K. Chan, ``Adaptive federated learning in resource constrained edge computing systems," \emph{IEEE J. Sel. Areas Commun.}, vol. 37, no. 6, pp. 1205-1221, Jun. 2019.

\bibitem{Yoshida2020HybridFL}
N. Yoshida, T. Nishio, M. Morikura, K. Yamamoto and R. Yonetani, ``Hybrid-FL for wireless networks: Cooperative learning mechanism using non-iid data," in \emph{Proc. IEEE Int. Conf. Commun. (ICC)}, Dublin, Ireland, Jun. 2020.

\bibitem{zeng2019energy}
Q. Zeng, Y. Du, K. K. Leung, and K. Huang, ``Energy-efficient radio resource allocation for federated edge learning,'' in \emph{Proc. IEEE Int. Conf. Commun. Workshops}, Dublin, Ireland, Jun. 2020. 

\bibitem{Jie2020client}
J. Xu and H. Wang, ``Client selection and bandwidth allocation in wireless federated learning networks: A long-term perspective," in \emph{IEEE Trans. Wireless Commun.}, early access, Oct. 2020.


\bibitem{Sun2020edge}
Y. Sun, W. Shi, X. Huang, S. Zhou and Z. Niu, ``Edge learning with timeliness constraints: Challenges and solutions," in \emph{IEEE Commun. Mag.}, vol. 58, no. 12, pp. 27-33, Dec. 2020.

\bibitem{Chen2020Convergence}
M. Chen, H. V. Poor, W. Saad and S. Cui, ``Convergence time optimization for federated learning over wireless networks,'' in \emph{IEEE Trans. Wireless Commun.}, early access, Dec. 2020.

\bibitem{Shi2021Joint}
W. Shi, S. Zhou, Z. Niu, M. Jiang and L. Geng, ``Joint device scheduling and resource allocation for latency constrained wireless federated learning," in \emph{IEEE Trans. Wireless Commun.}, vol. 20, no. 1, pp. 453-467, Jan. 2021. 

\bibitem{Ren2020scheduling}
J. Ren, Y. He, D. Wen, G. Yu, K. Huang and D. Guo, ``Scheduling for cellular federated edge learning with importance and channel awareness," in \emph{IEEE Trans. Wireless Commun.}, vol. 19, no. 11, pp. 7690-7703, Nov. 2020. 

\bibitem{abad2019hierar}
M. S. H. Abad, E. Ozfatura, D. Gunduz, and O. Ercetin, ``Hierarchical federated learning across heterogeneous cellular networks,'' \emph{IEEE Int. Conf. Acoustics, Speech and Signal Process. (ICASSP)}, Barcelona, Spain, May 2020.

\bibitem{tran2019federated}
N. H. Tran, W. Bao, A. Zomaya, M. N. H. Nguyen and C. S. Hong, ``Federated learning over wireless networks: Optimization model design and analysis," in \emph{Proc. IEEE INFOCOM}, Paris, France, May 2019.

\bibitem{yang2019energy}
Z. Yang, M. Chen, W. Saad, C. S. Hong and M. Shikh-Bahaei, ``Energy efficient federated learning over wireless communication networks,'' in \emph{IEEE Trans. Wireless Commun.}, early access, Nov. 2020.

\bibitem{Mo2020energy}
X. Mo and J. Xu, ``Energy-efficient federated edge learning with joint communication and computation design," \emph{arXiv preprint arXiv:2003.00199}, Mar. 2020.

\bibitem{Gunduz2020communicate}
D. Gunduz, D. B. Kurka, M. Jankowski, M. Mohammadi Amiri, E. Ozfatura and S. Sreekumar, ``Communicate to learn at the edge," in \emph{IEEE Commun. Mag.}, vol. 58, no. 12, pp. 14-19, Dec. 2020.

\bibitem{Zhu2020toward}
G. Zhu, D. Liu, Y. Du, C. You, J. Zhang and K. Huang, ``Toward an intelligent edge: Wireless communication meets machine learning," in \emph{IEEE Commun. Mag.}, vol. 58, no. 1, pp. 19-25, Jan. 2020.

\bibitem{mma2020machine}
M. Mohammadi Amiri and D. Gunduz, ``Machine learning at the wireless edge: Distributed stochastic gradient descent over-the-air‌,'' \emph{IEEE Trans. Signal Process.}, vol. 68, pp. 2155-2169, Apr. 2020.

\bibitem{mma2019federated}
M. Mohammodi Amiri and D. Gunduz, ``Federated learning over wireless fading channels," in \emph{IEEE Trans. Wireless Commun.}, vol. 19, no. 5, pp. 3546-3557, May 2020.

\bibitem{zhu2019broadband}
G. Zhu, Y. Wang, and K. Huang, ``Low-latency broadband analog aggregation for federated edge learning,''  in \emph{IEEE Trans. Wireless Commun.}, vol. 19, no. 1, pp. 491-506, Jan. 2020.

\bibitem{Zhu2020tuning}
G. Zhu, J. Xu and K. Huang, ``Over-the-air computing for 6G: Turning air into a computer." \emph{arXiv preprint arXiv:2009.02181} Sept. 2020.


\bibitem{Guo2021analog}
H. Guo, A. Liu and V. K. N. Lau, ``Analog gradient aggregation for federated learning over wireless networks: Customized design and convergence analysis," in \emph{IEEE Internet Things J.}, vol. 8, no. 1, pp. 197-210, Jan. 2021.

\bibitem{Wei2021federated}
X. Wei and C. Shen, ``Federated learning over noisy channels: Convergence analysis and design examples,'' \emph{arXiv preprint arXiv:2101.02198}, Jan. 2021.

\bibitem{mma2020convergence}
M. M. Amiri, D. Gunduz, S. R. Kulkarni, and H. V. Poor, ``Convergence of federated learning over a noisy downlink," \emph{arXiv preprint arXiv:2008.11141}, Aug. 2020.

\bibitem{Zhang2020gradient}
N. Zhang and M. Tao, ``Gradient statistics aware power control for over-the-air federated learning in fading channels," \emph{arXiv preprint arXiv:2003.02089}, Mar. 2020.

\bibitem{Sery2020het}
T. Sery, N. Shlezinger, K. Cohen, and Y. C. Eldar, ``Over-the-air federated learning from heterogeneous data,'' \emph{arXiv preprint arXiv:2009.12787}, Sept. 2020.


\bibitem{Amiri2020blind}
M. Mohammodi Amiri, T. M. Duman, D. Gunduz, S. R. Kulkarni, H. V. Poor, ``Blind federated edge learning." \emph{arXiv preprint arXiv:2010.10030}, Oct. 2020.

\bibitem{Shao2021misaligned}
Y. Shao, D. Gunduz, S. C. Liew, ``Federated edge learning with misaligned over-the-air computation," \emph{arXiv preprint arXiv:2102.13604}, Feb. 2021.

\bibitem{Zhu2020one}
G. Zhu, Y. Du, D. Gunduz and K. Huang, ``One-bit over-the-air aggregation for communication-efficient federated edge learning: Design and convergence analysis,"  in \emph{IEEE Trans. Wireless Commun.}, Nov. 2020.


\bibitem{neely2010stochastic}
M. J. Neely, \emph{Stochastic Network Optimization With Application
	to Communication and Queueing Systems}. San Rafael, CA, USA:
Morgan \& Claypool, 2010.

\bibitem{sun2017emm}
Y. Sun, S. Zhou, and J. Xu, ``EMM: Energy-aware mobility management for mobile edge computing in ultra dense networks,''  in \emph{IEEE J. Sel. Areas Commun.}, vol. 35, no. 11, pp. 2637-2646, Nov. 2017.

%
%

%


	
\end{thebibliography}
\end{document}